\newcommand{\ARXIV}{} % comment out to produce shorter camera-ready
\theoremstyle{plain}
  \newtheorem{thm}{\protect\theoremname}
\theoremstyle{definition}
  \newtheorem{defn}[thm]{\protect\definitionname}
  \newtheorem{prop}[thm]{\protect\propositionname}
  \newtheorem{cor}[thm]{\protect\corollaryname}
\theoremstyle{plain}
  \newtheorem{rem}[thm]{\protect\remarkname}
  \newtheorem{ex}[thm]{\protect\examplename}
  \providecommand{\definitionname}{Definition}
  \providecommand{\lemmaname}{Lemma}
  \providecommand{\propositionname}{Proposition}
  \providecommand{\corollaryname}{Corollary}
  \providecommand{\remarkname}{Remark}
  \providecommand{\examplename}{Example}
\providecommand{\theoremname}{Theorem}
\newcommand{\term}[1]{\emph{#1}\xspace}
\newcommand{\cf}[0]{\emph{cf.}\xspace}
\newcommand{\eg}[0]{\emph{e.g.},\xspace}
\renewcommand{\v}[1]{\ensuremath{\mathbf{#1}}}
\renewcommand{\Pr}[1]{\ensuremath{\mathrm{Pr}\left(#1\right)}}
\newcommand{\Prs}[2]{\ensuremath{\mathrm{Pr}_{#1}\left(#2\right)}}
\newcommand{\CExp}[2]{\ensuremath{\mathbb{E}\left[\left.#1\right|#2\right]}}
\newcommand{\indic}[1]{\ensuremath{\mathbf{1}\left[#1\right]}}
\newcommand{\cdf}[1]{\ensuremath{\Phi\left(#1\right)}}
\newcommand{\ecdf}[2]{\ensuremath{\Phi_{#1}\left(#2\right)}}
\newcommand{\reals}{\ensuremath{\mathbb{R}}\xspace}
\newcommand{\naturals}{\ensuremath{\mathbb{N}}\xspace}
\newcommand{\domain}{\ensuremath{\mathcal{D}}\xspace}
\newcommand{\DBs}{\ensuremath{\domain^n}\xspace}
\newcommand{\responses}{\ensuremath{\mathcal{R}}\xspace}
\newcommand{\normed}{\ensuremath{\mathcal{B}}\xspace}
\newcommand{\norm}[1]{\ensuremath{\left\|{#1}\right\|_{\normed}}\xspace}
\newcommand{\Lap}[1]{\ensuremath{\mathrm{Lap}\left(#1\right)}\xspace}
\newcommand{\Gauss}[1]{\ensuremath{\mathcal{N}\left(#1\right)}\xspace}
\newcommand{\diag}[1]{\ensuremath{\mathrm{diag}\left(#1\right)}}
\newcommand{\sampler}{\textsc{SensitivitySampler}\xspace}
\newcommand{\samplermech}{\textsc{Sample-Then-Respond}\xspace}
\newcommand{\myparagraph}[1]{\noindent\textbf{#1}\xspace}
\icmltitlerunning{Pain-Free Random Differential Privacy with Sensitivity Sampling}
\begin{document}

\twocolumn[
\icmltitle{Pain-Free Random Differential Privacy with Sensitivity Sampling}

% It is OKAY to include author information, even for blind
% submissions: the style file will automatically remove it for you
% unless you've provided the [accepted] option to the icml2017
% package.

% list of affiliations. the first argument should be a (short)
% identifier you will use later to specify author affiliations
% Academic affiliations should list Department, University, City, Region, Country
% Industry affiliations should list Company, City, Region, Country

% you can specify symbols, otherwise they are numbered in order
% ideally, you should not use this facility. affiliations will be numbered
% in order of appearance and this is the preferred way.
%\icmlsetsymbol{equal}{*}

\begin{icmlauthorlist}
\icmlauthor{Benjamin I. P. Rubinstein}{unimelb}
\icmlauthor{Francesco Ald\`a}{bochum}
\end{icmlauthorlist}

\icmlaffiliation{unimelb}{School of Computing and Information Systems, University of Melbourne, Australia}
\icmlaffiliation{bochum}{Horst G\"ortz Institute for IT Security and Faculty of Mathematics, Ruhr-Universit\"at Bochum, Germany}

\icmlcorrespondingauthor{BR}{brubinstein@unimelb.edu.au}
\icmlcorrespondingauthor{FA}{francesco.alda@rub.de}

% You may provide any keywords that you
% find helpful for describing your paper; these are used to populate
% the "keywords" metadata in the PDF but will not be shown in the document
\icmlkeywords{Differential Privacy, Sampling}

\vskip 0.3in
]

% this must go after the closing bracket ] following \twocolumn[ ...

% This command actually creates the footnote in the first column
% listing the affiliations and the copyright notice.
% The command takes one argument, which is text to display at the start of the footnote.
% The \icmlEqualContribution command is standard text for equal contribution.
% Remove it (just {}) if you do not need this facility.

\printAffiliationsAndNotice{}  % leave blank if no need to mention equal contribution
%\printAffiliationsAndNotice{\icmlEqualContribution} % otherwise use the standard text.
%\footnotetext{hi}

\begin{abstract}
Popular approaches to differential privacy, such as the
Laplace and exponential mechanisms, calibrate randomised smoothing
through global sensitivity of the target non-private
function. Bounding such sensitivity is often a prohibitively complex
analytic calculation. As an alternative, we propose a straightforward
sampler for estimating sensitivity of non-private mechanisms. Since our
sensitivity estimates hold with high probability, any mechanism that
would be $(\epsilon,\delta)$-differentially private under bounded 
global sensitivity automatically achieves $(\epsilon,\delta,\gamma)$-random
differential privacy~\citep{hall2012random}, without any target-specific
calculations required. 
We demonstrate on worked example learners how our usable approach adopts
a naturally-relaxed privacy guarantee, while achieving more accurate
releases even for non-private functions that are black-box computer programs.
\end{abstract}

\section{Introduction}

Differential privacy \cite{dwork2006calibrating} has emerged as the
dominant framework for protected 
privacy of sensitive training data when releasing learned models to untrusted
third parties. This paradigm owes its popularity in part to the strong privacy model
provided, and in part to the availability of general building block mechanisms such as
the Laplace \cite{dwork2006calibrating} \& exponential \cite{mcsherry2007mechanism},
and to composition lemmas for
building up more complex mechanisms. These generic
mechanisms come endowed with privacy and utility bounds that hold
for any appropriate application. Such tools almost alleviate the burden of
performing theoretical analysis in developing privacy-preserving learners. However a 
persistent requirement is the need to bound global sensitivity---a Lipschitz
constant of the target, non-private function. For
simple scalar statistics of the private database, sensitivity can be easily
bounded \cite{dwork2006calibrating}. However in many applications---from
collaborative filtering~\cite{mcsherry2009differentially} to Bayesian
inference~\cite{dimitrakakis2014robust,dimitrakakis2014robustNew,wang2015privacy}---the principal challenge
in privatisation is completing this calculation.

In this work we develop a simple approach to approximating
global sensitivity with high probability, assuming only oracle access
to target function evaluations.
Combined with generic mechanisms like Laplace, exponential, Gaussian or Bernstein, our sampler
enables systematising of privatisation: arbitrary computer programs
can be made differentially private with no additional mathematical analysis nor
dynamic/static analysis, whatsoever.
Our approach does not make any assumptions about the function under evaluation
or underlying sampling distribution. 

\myparagraph{Contributions.} This paper contributes: 
\begin{enumerate*}[label=\roman*)]
	\item \sampler for easily-implemented empirical estimation of global sensitivity of (potentially black-box) non-private mechanisms;
	\item Empirical process theory for guaranteeing random differential privacy for any mechanism that preserves (stronger) differential privacy under bounded global sensitivity;
	\item Experiments demonstrating our sampler on learners for which analytical sensitivity bounds are highly involved; and %Future mechanisms may employ our sampler instead of performing such analysis; and
	\item Examples where sensitivity estimates beat (pessimistic) bounds, delivering pain-free random differential privacy at higher levels of accuracy, when used in concert with generic privacy-preserving mechanisms.
\end{enumerate*}

%We next review related work found in the machine learning, databases and theory literature. In Sections~\ref{sec:background} \&~\ref{sec:problem} we recall preliminary definitions and develop our setting \& algorithm for sensitivity sampling in Sections~\ref{sec:sensitivity-induced} \&~\ref{sec:algorithm}. We then provide guarantees for our sampler in Section~\ref{sec:theory} and experimental evaluation in Section~\ref{sec:experiments}. Section~\ref{sec:conclusion} concludes the paper.

\myparagraph{Related Work.}
This paper builds on the large body of work in differential
privacy~\cite{dwork2006calibrating,dwork2014algorithmic}, which has gained broad
interest in part due the framework's strong guarantees of data privacy when releasing aggregate
statistics or models, and due to availability of many generic privatising mechanisms
\eg: Laplace~\cite{dwork2006calibrating}, 
exponential~\cite{mcsherry2007mechanism}, Gaussian~\cite{dwork2014algorithmic},
Bernstein~\cite{alda2017bernstein} and many more. 
While these mechanisms present a path to privatisation without
need for reproving differential privacy or utility, they do have in common
a need to analytically bound sensitivity---a Lipschitz-type condition on the target
non-private function. Often derivations are 
intricate \eg for collaborative filtering~\cite{mcsherry2009differentially},
SVMs~\cite{rubinstein2012learning,chaudhuri2011differentially},
model selection~\cite{thakurta2013differentially}, feature selection~\cite{kifer2012private},
Bayesian inference~\cite{dimitrakakis2014robust,dimitrakakis2014robustNew,wang2015privacy}, SGD in deep
learning~\cite{abadi2016deep}, etc. Undoubtedly the non-trivial
nature of bounding sensitivity prohibits adoption by some
domain experts. We address this challenge through the
\sampler that estimates sensitivity empirically---even for privatising black-box
computer programs---providing high probability privacy guarantees generically.

Several systems have been developed to ease deployment of
differentially privacy, with \citet{barthe2016programming} overviewing
contributions from Programming Languages.
Dynamic approaches track privacy budget expended at runtime, 
\ifdefined\ARXIV typically through basic operations on data with known privacy loss, \fi
\eg the PINQ~\cite{mcsherry2009privacy,mcsherry2010differentially} and
Airavat~\cite{roy2010airavat} systems. 
\ifdefined\ARXIV These create a C\# LINQ-like interface and a framework for bringing differential privacy to MapReduce, respectively. 
While such approaches are flexible, the lack of static checking means privacy
violations are not caught until after (lengthy) computations. Static checking
approaches provide forewarning of privacy usage. In this mould,
Fuzz~\cite{reed2010distance,palamidessi} offers a higher-order functional
language whose static type system tracks sensitivity based on linear logic,
so that sensitivity (and therefore differential privacy) is guaranteed by
typechecking. Limited to a similar level of program expressiveness as PINQ,
Fuzz cannot typecheck many desirable target programs and specifically cannot
track data-dependent function sensitivity, with the DFuzz system demonstrating
preliminary work towards this challenge~\cite{gaboardi2013linear}. \else
Static checking approaches provide privacy usage forewarning, \eg: 
Fuzz~\cite{reed2010distance,palamidessi}, DFuzz~\cite{gaboardi2013linear}. 
\fi
While promoting privacy-by-design, such approaches impose 
specialised languages or limit target feasibility---challenges
addressed by this work. Moreover our \sampler mechanism 
complements such systems, \eg within broader frameworks for
protecting against side-channel attacks~\cite{haeberlen2011differential,mohan2012gupt}.

\citet{NIPS2016_6050} show that special-case Gibbs sampler is $(\epsilon,\delta)$-DP
without bounded sensitivity. 
\citet{nissim2007smooth} ask: Why calibrate for worst-case
global sensitivity when the \emph{actual} database does not witness worst-case neighbours?
Their smoothed sensitivity approach privatises local sensitivity, which itself is sensitive to perturbation. While this can lead to better sensitivity estimates, our sampled
sensitivity still does not require analytical
bounds. A related approach is the sample-and-aggregate
mechanism~\cite{nissim2007smooth} which avoids computation of sensitivity of
the underlying target function and instead requires sensitivity of an aggregator
combining the outputs of the non-private target run repeatedly on subsamples of the
data. By contrast, our approach provides direct sensitivity estimates,
permitting direct privatisation.

Our application of empirical process theory to estimate hard-to-compute quantities
resembles the work of \citet{riondato2015mining}. They use VC-theory and sampling
to approximate mining frequent itemsets.
Here we approximate analytical computations, and to our knowledge provide a first
generic mechanism that preserves random differential privacy~\cite{hall2012random}---a
natural weakening of the strong guarantee of differential privacy. \citet{hall2012random}
leverage empirical process theory for a specific worked example, while our setting is general sensitivity estimation.

\section{Background}
\label{sec:background}

We are interested in non-private mechanism $f:\DBs\to\normed$
that maps \term{databases} in product space over domain \domain
to \term{responses} in a normed space \normed. The terminology of
``database" (DB) comes from statistical databases, and should be understood
as a dataset.

\begin{ex}
For instance, in supervised learning of linear classifiers, the domain could
be Euclidean vectors comprising features \& labels, and responses
might be parameterisations of learned classifiers such as a normal
vector.
\end{ex}

We aim to
estimate sensitivity which is commonly used to calibrate noise
in differentially-private mechanisms.
\begin{defn}
	The \term{global sensitivity} of non-private $f:\DBs\to\normed$
	is given by $\overline{\Delta}=\sup_{D,D'}\norm{f(D)-f(D')}$,
where the supremum is taken over all pairs of \term{neighbouring} databases $D,D'$ in
$\domain^n$ that differ in one point.
\end{defn}

%\begin{rem}
%Note that the generality of response space may appear to be overkill:
%Laplace, Gaussian and Bernstein mechanisms all require sensitivity
%over Euclidean space, under the $L_1$ or $L_2$ norms. However, for the
%exponential mechanism it is convenient to consider the score
%function itself as the release, with function space norm.
%Section~\ref{sec:sensitivity-induced} presents detailed accounts of each mechanism.
%\end{rem}

\begin{defn}
	Randomized \term{mechanism} $M:\DBs\to\responses$ responding
	with values in arbitrary \term{response set} \responses preserves \term{$\epsilon$-differential}
privacy for $\epsilon>0$ if for all neighbouring $D,D'\in\domain^n$
and measurable $R\subset\responses$ it holds that $\Pr{M(D)\in R}\leq\exp(\epsilon)\Pr{M(D')\in R}$.
If instead for $0<\delta<1$ it holds that $\Pr{M(D)\in R}\leq\exp(\epsilon)\Pr{M(D')\in R}+\delta$
then the mechanism preserves the weaker notion of \term{$(\epsilon,\delta)$-differential
privacy.}
\end{defn}

In Section~\ref{sec:sensitivity-induced}, we recall a number of key mechanisms that preserve these notions of
privacy by virtue of target non-private function sensitivity.

%Typical choices on Euclidean
%responses include the $L_{1}$ and $L_{2}$ norms.

%$P$ could be the underlying distribution from which a sensitive database
%was drawn---in the case of sensitive training data but insensitive sources
%of test data; $P$ could be an alternate test distribution of interest in
%the case of domain adaptation; a uniform distribution or otherwise
%non-informative likelihood.
%
%We recall a number of mechanisms that satisfy

%We will focus on the Laplace mechanism for the remainder.
%\begin{lem}
%For $f:\DBs\to\reals^{d}$ the Laplace mechanism
%releases $f(D)$ with added iid Laplace noise of mean zero and scale
%$\Delta_{n}/\epsilon$ and achieves $\epsilon$-differential privacy.
%\end{lem}

The following definition due to \citet{hall2012random} relaxes the requirement that
uniform smoothness of response distribution holds on all pairs of databases, to the
requirement that uniform smoothness holds for likely database pairs.

\begin{defn}\label{def:random}
	Randomized mechanism $M:\DBs\to\responses$ \linebreak[4] responding
	with values in an arbitrary response set \responses \linebreak[4] preserves $(\epsilon,\gamma)$-random
	differential privacy, at \linebreak[4] privacy level $\epsilon>0$ and confidence $\gamma\in(0,1)$, if
$\Pr{\forall R\subset\responses, \Pr{M(D)\in R}\leq e^{\epsilon}\Pr{M(D')\in R}} \geq  1-\gamma$,
with the inner probabilities over the mechanism's randomization, and the
outer probability over neighbouring $D,D'\in\domain^n$
drawn from some $P^{n+1}$. The weaker $(\epsilon,\delta)$-DP has analogous definition as $(\epsilon,\delta,\gamma)$-RDP.
\end{defn}

\begin{rem}
While strong $\epsilon$-DP is ideal, utility may
demand compromise. Precedent exists for weaker privacy,
with the definition of $(\epsilon,\delta)$-DP wherein on any databases (including likely ones)
a private mechanism may leak sensitive information on low probability responses, 
forgiven by the additive $\delta$ relaxation. $(\epsilon,\gamma)$-RDP offers an alternate
relaxation, where on all but a small $\gamma$-proportion of unlikely database
pairs, strong $\epsilon$-DP holds---RDP plays a useful role. %The next example demonstrates another view. % on RDP vs DP.}
\end{rem}

\begin{ex}\label{ex:sample-mean}
Consider a database on unbounded positive reals $D\in\reals_+^n$ representing
loan default times of a bank's customers, and target release statistic 
$f(D)=n^{-1}\sum_{i=1}^n D_i$ the sample mean. 
To $\epsilon$-DP privatise scalar-valued $f(D)$ it is natural to look to the Laplace mechanism.
However the mechanism requires a bound on the statistic's global sensitivity,
impossible under unbounded $D$. Note 
for $\Delta>0$, when neighbouring  $D, D'$ satisfy $\{|f(D)-f(D')|\leq\Delta\}$
then Laplace mechanism $M_{\Delta,\epsilon}(f(D))$ enjoys $\epsilon$-DP on that
DB pair. 
%$\{\forall t\in\reals, \Pr{M_{\Delta(D),\epsilon}=t}\leq \exp(\epsilon)\Pr{M_{\Delta,\epsilon}(D')=t}\}$.
Therefore the probability of the latter event is bounded below by the probability of the former.
Modelling the default times by iid exponential variables of rate $\lambda>0$,
then $|f(D)-f(D')|=|D_n-D_n'|/n$ is 
distributed as $\mathrm{Exp}(n\lambda)$, and so 
\begin{align*}
& \Pr{\forall t\in\reals, \Pr{M_{\Delta,\epsilon}(D)=t}\leq e^\epsilon \Pr{M_{\Delta,\epsilon}(D')=t}} \\
\geq & \Pr{|f(D)-f(D')|\leq\Delta} 
 =  1 - e^{-\lambda n \Delta} \geq  1 - \gamma\enspace,
\end{align*}
provided that $\Delta\geq \log(1/\gamma)/(\lambda n)$. While
$\epsilon$-DP fails due to unboundedness, the data is likely
bounded and so the mechanism is likely strongly private: 
$M_{\Delta,\epsilon}$ is $(\epsilon,\gamma)$-RDP. 
\end{ex}

\section{Problem Statement}
\label{sec:problem}

We consider a statistician looking to apply a differentially-private
mechanism to an $f:\DBs\to\normed$ whose sensitivity cannot easily be
bounded analytically (\cf Example~\ref{ex:sample-mean} or the case of a computer program).

%\ifdefined\ARXIV
%\begin{ex}
%The statistician may wish to release $f$ implicitly defined as a computer
%program, using the $\epsilon$-differentially-private Laplace mechanism
%(\cf Corollary~\ref{cor:laplace})
%$M_{\overline{\Delta}}(D)\sim \mathrm{Lap}\left(f(D), \overline{\Delta}/\epsilon\right)$. However
%the sensitivity $\overline{\Delta}$ may not be forthcoming.
%\end{ex}
%\fi

Instead we assume that the statistician has the ability to sample from some arbitrary
product space $P^{n+1}$ on $\domain^{n+1}$, can evaluate $f$
arbitrarily (and in particular on the result of this sampling), and
is interested in applying a privatising mechanism with the guarantee
of random differential privacy (Definition~\ref{def:random}).

\begin{rem}\label{rem:P}
Natural
choices for $P$ present themselves for sampling or defining random
differential privacy. $P$ could be taken as the underlying
distribution from which a sensitive DB was drawn---in the case of
sensitive training data but insensitive data source;
an alternate test distribution of interest in the case of domain
adaptation;  or $P$ could be uniform or an otherwise
non-informative likelihood (\cf Example~\ref{ex:sample-mean}). Proved in 
\ifdefined\ARXIV Appendix~\ref{sec:transferrdp-proof}, \else full report~\cite{AR2017}, \fi
the following relates RDP of similar distributions.
\end{rem}

\begin{prop}\label{prop:transferRDP}
Let $P, Q$ be distributions on \domain with bounded KL divergence $KL(P\|Q)\leq\tau$.
If mechanism $M$ on databases in $\domain^n$ is RDP with confidence $\gamma>0$ wrt $P$ then
it is also RDP with confidence $\gamma+\sqrt{(n+1)\tau/2}$ wrt $Q$, with the same
privacy parameters $\epsilon$ (or $\epsilon,\delta$).
\end{prop}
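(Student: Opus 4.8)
\myparagraph{Proof strategy.} The plan is to recognise that the set of $(n{+}1)$-tuples witnessing a privacy violation is a fixed measurable subset of $\domain^{n+1}$, \emph{independent} of the distribution generating the data, and then to bound the discrepancy between its mass under $P^{n+1}$ and under $Q^{n+1}$ by the total variation distance, which in turn is controlled by Pinsker's inequality together with tensorisation of the KL divergence.

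Concretely, I would first fix $M$ together with the privacy parameters $\epsilon$ (or $\epsilon,\delta$). A draw of neighbouring $D,D'\in\domain^n$ from $P^{n+1}$ is equivalently a single draw of an $(n{+}1)$-tuple $z=(z_1,\dots,z_{n+1})\in\domain^{n+1}$ from the product measure $P^{n+1}$, from which the canonical neighbouring pair is read off (the $n{-}1$ shared points, the point unique to $D$, and the point unique to $D'$). Let
\[
  B \;=\; \bigl\{ z\in\domain^{n+1} : \exists\,R\subset\responses,\ \Pr{M(D)\in R} > e^{\epsilon}\Pr{M(D')\in R} \bigr\}
\]
(with the obvious additive-$\delta$ modification of the inner inequality in the $(\epsilon,\delta)$ case), where $D,D'$ are the databases induced by $z$. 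By Definition~\ref{def:random}, RDP of $M$ with confidence $\gamma$ with respect to $P$ is exactly the assertion $P^{n+1}(B)\le\gamma$, and the conclusion to be established is $Q^{n+1}(B)\le\gamma+\sqrt{(n+1)\tau/2}$. The key point is that $B$ depends only on $M$ and the privacy parameters, not on $P$ or $Q$.

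The remaining steps are short. Writing $\mathrm{TV}(\cdot,\cdot)$ for the total variation distance, I would bound
\[
  Q^{n+1}(B) \;\le\; P^{n+1}(B) + \bigl|Q^{n+1}(B)-P^{n+1}(B)\bigr| \;\le\; \gamma + \mathrm{TV}\bigl(P^{n+1},Q^{n+1}\bigr)\enspace,
\]
then apply Pinsker's inequality $\mathrm{TV}(P^{n+1},Q^{n+1})\le\sqrt{\tfrac12 KL(P^{n+1}\|Q^{n+1})}$, and finally additivity of the KL divergence over product measures, $KL(P^{n+1}\|Q^{n+1})=(n+1)\,KL(P\|Q)\le(n+1)\tau$. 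Chaining these three bounds gives the claim, and the argument is insensitive to whether the base guarantee is $\epsilon$-DP or $(\epsilon,\delta)$-DP, since only the measure of $B$ changes, not the machinery.

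I expect the only genuine subtleties to be bookkeeping ones: verifying that $B$ is measurable (a standard regularity assumption on $M$), confirming that the outer randomness in Definition~\ref{def:random} really is a single product-measure draw so that KL tensorisation applies verbatim, and tracking the normalisation convention in Pinsker's inequality so that the constant emerges as $\sqrt{(n+1)\tau/2}$ rather than, say, $\sqrt{2(n+1)\tau}$.
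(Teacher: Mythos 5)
Your proposal is correct and follows essentially the same route as the paper's proof: bound the probability of the (distribution-independent) privacy-violation event under $Q^{n+1}$ via the total variation distance to $P^{n+1}$, then apply Pinsker's inequality together with additivity of KL over the product measure. The only cosmetic difference is that you work with the violation event $B$ where the paper works with its complement $A$.
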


\section{Sensitivity-Induced Differential Privacy}
\label{sec:sensitivity-induced}

When a privatising mechanism $M$ is known to achieve
differential privacy for some mapping $f:\DBs\to\normed$ under bounded
global sensitivity, then our approach's
high-probability estimates of sensitivity will imply high-probability
preservation of differential privacy. In order to reason about such
arguments, we introduce the concept of sensitivity-induced differential
privacy.

\begin{defn}
For arbitrary mapping $f:\DBs\to\normed$ and randomised mechanism
$M_\Delta: \normed\to\responses$, we say that \term{$M_\Delta$ is sensitivity-induced
$\epsilon$-differentially private} if for a neighbouring pair of
databases $D,D'\in\domain^n$, and $\Delta\geq 0$
\begin{align*}
	&\norm{f(D) - f(D')} \leq \Delta  \\
	\Longrightarrow \hspace{1em}
	&\forall R\subset\responses,\ \Pr{M_\Delta(f(D))\in R} \\
	& \hphantom{\forall R\subset\responses,\ \ }\leq \ \exp(\epsilon) \cdot \Pr{M_\Delta(f(D'))\in R}
\end{align*}
with the qualification on $R$ being all measurable subsets of the response set \responses.
In the same vein, the analogous definition for $(\epsilon,\delta)$-differential privacy
can also be made.
\end{defn}

Many generic mechanisms in use today preserve differential privacy by virtue of
satisfying this condition. The following are immediate consequences of existing
proofs of differential privacy. First, when a non-private target function
$f$ aims to release Euclidean vectors responses.

\begin{cor}[Laplace mechanism]\label{cor:laplace}
Consider database $D\in\DBs$, normed space
$\normed=(\reals^d,\|\cdot\|_1)$ for $d\in\naturals$, non-private function
$f: \DBs \to \normed$.
The Laplace mechanism\footnote{$\Lap{\v{a},b}$ has unnormalised PDF
$\exp(-\|\v{x}-\v{a}\|_1/b)$.} \cite{dwork2006calibrating}
$M_\Delta(f(D)) \sim \Lap{f(D), \Delta/\epsilon}$, is sensitivity-induced
$\epsilon$-differentially private.
\end{cor}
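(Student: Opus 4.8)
The plan is to verify the sensitivity-induced privacy condition directly from the response densities, following the standard Laplace-mechanism argument. Fix a neighbouring pair $D,D'\in\DBs$, write $\v{a}=f(D)$ and $\v{a}'=f(D')$ in $\reals^d$, and assume the hypothesis $\norm{\v{a}-\v{a}'}\leq\Delta$ (here $\norm{\cdot}=\|\cdot\|_1$, since $\normed=(\reals^d,\|\cdot\|_1)$). Put $b=\Delta/\epsilon$. Because $\int_{\reals^d}\exp(-\|\v{x}-\v{a}\|_1/b)\,d\v{x}=(2b)^d$ regardless of the location parameter $\v{a}$, the mechanism $M_\Delta(f(D))\sim\Lap{\v{a},b}$ has Lebesgue density $p_{\v{a}}(\v{x})=(2b)^{-d}\exp(-\|\v{x}-\v{a}\|_1/b)$, and likewise $p_{\v{a}'}$ for $M_\Delta(f(D'))$.

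Next I would bound the likelihood ratio pointwise. For any $\v{x}\in\reals^d$ the (location-independent) normalising constants cancel, so
\[
\frac{p_{\v{a}}(\v{x})}{p_{\v{a}'}(\v{x})}=\exp\!\left(\frac{\|\v{x}-\v{a}'\|_1-\|\v{x}-\v{a}\|_1}{b}\right)\leq\exp\!\left(\frac{\|\v{a}-\v{a}'\|_1}{b}\right)\leq\exp\!\left(\frac{\Delta}{b}\right)=\exp(\epsilon)\enspace,
\]
where the first inequality is the triangle inequality $\|\v{x}-\v{a}'\|_1\leq\|\v{x}-\v{a}\|_1+\|\v{a}-\v{a}'\|_1$ and the second uses the sensitivity hypothesis $\|\v{a}-\v{a}'\|_1\leq\Delta$. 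Integrating this pointwise bound over an arbitrary measurable $R\subset\responses=\reals^d$ gives
\[
\Pr{M_\Delta(f(D))\in R}=\int_R p_{\v{a}}(\v{x})\,d\v{x}\leq\exp(\epsilon)\int_R p_{\v{a}'}(\v{x})\,d\v{x}=\exp(\epsilon)\,\Pr{M_\Delta(f(D'))\in R}\enspace,
\]
which is precisely the sensitivity-induced $\epsilon$-DP conclusion.

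I do not expect any genuine obstacle: as the text notes, this is an immediate consequence of the known Laplace-mechanism proof, the point being merely to observe that the hypothesis $\norm{f(D)-f(D')}\le\Delta$ is exactly what that proof uses in place of global sensitivity. The only steps needing a little care are that the normalising constant of the Laplace density is independent of its location (so it cancels in the ratio) and the correct orientation of the triangle inequality bounding $\|\v{x}-\v{a}'\|_1-\|\v{x}-\v{a}\|_1$. The same computation trivially yields the $(\epsilon,0)$ variant in the $(\epsilon,\delta)$ form of the definition.
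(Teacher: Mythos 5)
Your proof is correct and is exactly the standard Laplace-mechanism density-ratio argument that the paper invokes when it calls the corollary ``an immediate consequence of existing proofs of differential privacy''; the only substantive observation, which you make explicitly, is that the argument uses the bound $\norm{f(D)-f(D')}\le\Delta$ only on the given pair $D,D'$ rather than globally. No gaps.
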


\begin{ex}
Example~\ref{ex:sample-mean} used Corollary~\ref{cor:laplace} for
RDP of the Laplace mechanism on unbounded bank loan defaults.
\end{ex}

\begin{cor}[Gaussian mechanism]\label{cor:gaussian}
Consider database $D\in\DBs$, normed space
$\normed=(\reals^d,\|\cdot\|_2)$ for some $d\in\naturals$, and non-private function
$f: \DBs \to \normed$.
The Gaussian mechanism \cite{dwork2014algorithmic}
$M_\Delta(f(D)) \sim \Gauss{f(D), \diag{\sigma}}$ with
$\sigma^2>2\Delta^2 \log(1.25/\delta)/\epsilon^2$, is sensitivity-induced
$(\epsilon,\delta)$-differentially private.
\end{cor}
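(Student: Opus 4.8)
The statement is the classical Gaussian-mechanism analysis of \citet{dwork2014algorithmic} read off at the level of a single neighbouring pair, so the plan is to transcribe that argument with the per-pair bound $\Delta$ in place of a global sensitivity bound. Fix neighbouring $D,D'$ with $\norm{f(D)-f(D')}\le\Delta$ (here $\normed=(\reals^{d},\|\cdot\|_2)$), write $\mu=f(D)$, $\mu'=f(D')$, $v=\mu-\mu'$, and $b=\|v\|_2\le\Delta$. The goal is to establish $\Pr{M_\Delta(\mu)\in R}\le e^{\epsilon}\Pr{M_\Delta(\mu')\in R}+\delta$ for every measurable $R\subset\responses$, which is exactly what sensitivity-induced $(\epsilon,\delta)$-differential privacy demands.

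The next step is to pass to the privacy-loss random variable $L=\ln\big(p_{\mu}(X)/p_{\mu'}(X)\big)$ with $X\sim M_\Delta(\mu)$. Because the noise covariance $\diag{\sigma}$ is a scalar multiple of the identity, the density is spherically symmetric and the only direction that matters is $v$; a short Gaussian computation gives $L=\langle v,X-\mu\rangle/\sigma^{2}+b^{2}/(2\sigma^{2})$, and since $\langle v,X-\mu\rangle\sim\mathcal{N}(0,\sigma^{2}b^{2})$ this is the scalar Gaussian $\mathcal{N}\!\big(b^{2}/(2\sigma^{2}),\,b^{2}/\sigma^{2}\big)$. By the standard privacy-loss bound, $\Pr{M_\Delta(\mu)\in R}\le e^{\epsilon}\Pr{M_\Delta(\mu')\in R}+\Pr{L>\epsilon}$ for every measurable $R$, so it suffices to show $\Pr{L>\epsilon}\le\delta$.

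It then remains to bound the Gaussian tail. Writing $Q$ for the standard-normal upper-tail function, $\Pr{L>\epsilon}=Q\!\big(\epsilon\sigma/b-b/(2\sigma)\big)$, whose argument is strictly decreasing in $b$; hence the worst case over $b\in(0,\Delta]$ is $b=\Delta$, and one is left to check that $\sigma^{2}>2\Delta^{2}\log(1.25/\delta)/\epsilon^{2}$ forces $Q\!\big(\epsilon\sigma/\Delta-\Delta/(2\sigma)\big)\le\delta$. This is precisely the scalar inequality in the proof of the Gaussian mechanism in \citet{dwork2014algorithmic} (the standard regime $\epsilon\le1$), obtained by combining $Q(t)\le e^{-t^{2}/2}$ with the stated lower bound on $\sigma$.

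The only genuine computation is this last tail estimate, and it is a known one, so the ``main obstacle'' is really just bookkeeping: verifying that the monotonicity in $b$ reduces the per-pair hypothesis $\norm{f(D)-f(D')}\le\Delta$ to the worst case $b=\Delta$, and that nowhere did the argument invoke global sensitivity. That last point is the content of the corollary — ``sensitivity-induced'' abstracts out exactly the per-pair condition, so that high-probability sensitivity estimates produced by \sampler can be substituted for $\Delta$ in later sections.
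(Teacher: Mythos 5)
Your proof is correct and matches the paper's intent exactly: the paper gives no separate argument for Corollary~\ref{cor:gaussian}, simply observing that the standard Gaussian-mechanism proof of \citet{dwork2014algorithmic} only ever uses the bound $\norm{f(D)-f(D')}\leq\Delta$ for the single pair at hand, which is precisely the privacy-loss computation you transcribe. Your explicit note that the final tail estimate holds in the regime $\epsilon\leq 1$ is a caveat the paper inherits silently from \citet{dwork2014algorithmic}, so flagging it is a small improvement rather than a discrepancy.
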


Second, $f$ may aim to release elements of an arbitrary set $\responses$,
where a score function $s(D,\cdot)$ benchmarks quality of potential releases (placing a partial ordering on $\responses$).

\begin{cor}[Exponential mechanism]\label{cor:exponential}
Consider database $D\in\DBs$, response space \responses, normed space
$\normed=\left(\reals^\responses, \|\cdot\|_\infty\right)$, non-private score function
$s: \DBs\times\responses\to\reals$, and restriction
$f: \DBs \to \normed$ given by $f(D)=s(D,\cdot)$.
The exponential mechanism \cite{mcsherry2007mechanism}
$M_\Delta(f(D)) \sim \exp\left(\epsilon \left(f(D)\right)(r) / 2\Delta\right)$, which
when normalised specifies a PDF over responses $r\in\responses$, is sensitivity-induced
$\epsilon$-differentially private.
\end{cor}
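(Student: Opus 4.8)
The plan is to recall the classical proof that the exponential mechanism preserves $\epsilon$-differential privacy and to observe that every step of it invokes only a bound on the score function's sensitivity; this yields the sensitivity-induced statement immediately. So fix a neighbouring pair $D,D'\in\domain^n$ and assume the hypothesis $\norm{f(D)-f(D')}\leq\Delta$, which --- since the norm on $\reals^{\responses}$ is $\|\cdot\|_\infty$ --- unpacks to $\sup_{r\in\responses}|s(D,r)-s(D',r)|\leq\Delta$. Writing the mechanism's density against the implicit reference measure on $\responses$ as $p_D(r)=\exp(\epsilon\, s(D,r)/(2\Delta))/Z_D$ with partition function $Z_D=\int_{\responses}\exp(\epsilon\, s(D,r')/(2\Delta))\,dr'$ (finite and positive, so the PDF is well-defined), it suffices to show $p_D(r)/p_{D'}(r)\leq\exp(\epsilon)$ for every $r$, since integrating that bound over any measurable $R\subset\responses$ gives $\Pr{M_\Delta(f(D))\in R}\leq\exp(\epsilon)\Pr{M_\Delta(f(D'))\in R}$.

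The ratio $p_D(r)/p_{D'}(r)$ factors into two pieces, each to be bounded by $\exp(\epsilon/2)$. First, the ratio of unnormalised weights at $r$ is $\exp\!\bigl(\tfrac{\epsilon}{2\Delta}(s(D,r)-s(D',r))\bigr)\leq\exp(\epsilon/2)$, directly from $|s(D,r)-s(D',r)|\leq\Delta$. Second, applying the same pointwise inequality inside the integral defining the partition function gives $\exp(\epsilon\, s(D,r')/(2\Delta))\geq\exp(-\epsilon/2)\exp(\epsilon\, s(D',r')/(2\Delta))$ for all $r'$, hence $Z_D\geq\exp(-\epsilon/2)\,Z_{D'}$, i.e.\ $Z_{D'}/Z_D\leq\exp(\epsilon/2)$. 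Multiplying the two bounds gives $p_D(r)/p_{D'}(r)\leq\exp(\epsilon)$, completing the argument (the $(\epsilon,\delta)$ variant being the special case $\delta=0$).

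There is no real obstacle: this is essentially the textbook proof, and the only point worth stressing is the bookkeeping --- the hypothesis $\norm{f(D)-f(D')}\leq\Delta$ is used exactly twice, once to control the numerator and once to control the normaliser, which is precisely why the mechanism is calibrated with $2\Delta$ (not $\Delta$) in the exponent. The only genuine care needed is measure-theoretic: ensuring the normalising integral $Z_D$ is finite and strictly positive so that $p_D$ is a bona fide density, which we take as part of the mechanism's specification (automatic, for instance, when $\responses$ is finite, or compact with bounded score).
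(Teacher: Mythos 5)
Your proof is correct and is precisely the classical McSherry--Talwar argument that the paper invokes without reproducing (the corollary is stated as an ``immediate consequence of existing proofs,'' namely this one): the observation that the only hypothesis used is the pointwise score-sensitivity bound $\sup_r|s(D,r)-s(D',r)|\leq\Delta$, applied once to the numerator and once to the normaliser, is exactly what makes the mechanism sensitivity-induced $\epsilon$-DP in the paper's sense. Nothing further is needed.
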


Third, $f$ could be function-valued as for learning settings, where given a training
set we wish to release a model (\eg classifier or predictive posterior) that
can be subsequently evaluated on (non-sensitive) test points.

\begin{cor}[Bernstein mechanism]\label{cor:bernstein}
Consider database $D\in\DBs$, query space $\mathcal{Y}=[0,1]^\ell$ with constant
dimension $\ell\in\naturals$, lattice cover of $\mathcal{Y}$ of size $k\in\naturals$
given by $\mathcal{L}=\left(\left\{0,1/k,\ldots,1\right\}\right)^\ell$, normed space
$\normed=\left(\reals^{\mathcal{Y}},\|\cdot\|_\infty\right)$, non-private function
$F: \DBs\times\mathcal{Y}\to\reals$, and restriction
$f: \DBs\to\normed$ given by $f(D)=F(D,\cdot)$. The Bernstein
mechanism~\cite{alda2017bernstein} $M_\Delta(f(D))\sim \left\{ \mathrm{Lap} \left((f(D))(\v{p}), \Delta(k+1)^{\ell}/\epsilon \right) \mid\v{p} \in \mathcal{L}\right\}$, is
sensitivity-induced $\epsilon$-differentially private.
\end{cor}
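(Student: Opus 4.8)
The plan is to reduce this claim to the already-established Laplace mechanism (Corollary~\ref{cor:laplace}), by viewing the Bernstein mechanism as the Laplace mechanism applied to the finite-dimensional vector of lattice evaluations of $f(D)$, optionally followed by privacy-free post-processing in the form of Bernstein polynomial reconstruction. Concretely, I would introduce the intermediate target $g:\DBs\to(\reals^{(k+1)^\ell},\|\cdot\|_1)$ given by $g(D) = \left((f(D))(\v{p})\right)_{\v{p}\in\mathcal{L}}$, noting that $|\mathcal{L}| = (k+1)^\ell$ since each of the $\ell$ coordinates of a lattice point ranges over the $(k+1)$-element set $\{0,1/k,\ldots,1\}$.

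The core step is the sensitivity bookkeeping. Suppose $\norm{f(D)-f(D')}\leq\Delta$ for a neighbouring pair $D,D'\in\domain^n$, which under $\normed=(\reals^{\mathcal{Y}},\|\cdot\|_\infty)$ means $\sup_{\v{y}\in\mathcal{Y}}\left|(f(D))(\v{y})-(f(D'))(\v{y})\right|\leq\Delta$. Since every lattice point $\v{p}\in\mathcal{L}$ lies in $\mathcal{Y}=[0,1]^\ell$, each of the $(k+1)^\ell$ coordinates of $g(D)-g(D')$ has magnitude at most $\Delta$, whence $\|g(D)-g(D')\|_1 \leq \Delta\,(k+1)^\ell$. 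Thus the sup-norm closeness of the function-valued objects is converted into $\ell_1$ closeness of the evaluation vector at the cost of the multiplicative lattice-size factor $(k+1)^\ell$.

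I would then invoke Corollary~\ref{cor:laplace} with the function $g$, dimension $d=(k+1)^\ell$, and the sensitivity bound $\Delta' = \Delta(k+1)^\ell$ just obtained: the Laplace mechanism $\Lap{g(D),\Delta'/\epsilon}$, which is exactly the product distribution $\left\{\mathrm{Lap}\left((f(D))(\v{p}),\Delta(k+1)^\ell/\epsilon\right)\mid\v{p}\in\mathcal{L}\right\}$ appearing in the statement, is sensitivity-induced $\epsilon$-differentially private. That is precisely the output distribution of $M_\Delta(f(D))$. If one prefers to regard the released object as the reconstructed Bernstein approximation $\hat{f}(D,\cdot)$ rather than the raw noisy lattice values, one appends the standard observation that this reconstruction is a fixed, data-independent, deterministic map of $M_\Delta(f(D))$, so the ratio bound $\Pr{M_\Delta(f(D))\in R}\leq e^{\epsilon}\Pr{M_\Delta(f(D'))\in R}$ is preserved under post-processing.

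The argument is essentially immediate given Corollary~\ref{cor:laplace}, so there is no deep obstacle; the only place requiring care is the sensitivity computation above — in particular confirming that $|\mathcal{L}|=(k+1)^\ell$ (not $k^\ell$) and that it is sup-norm rather than, say, integrated closeness of $f(D)$ and $f(D')$ that is being exploited. A secondary point worth stating explicitly is that the Bernstein basis weights entering the reconstruction are nonnegative and data-independent, hence irrelevant to privacy (they bear only on utility), which is why collapsing the mechanism to Laplace noise on $g$ loses nothing.
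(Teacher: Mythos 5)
Your argument is correct and coincides with the one the paper implicitly relies on: the corollary is stated as an immediate consequence of the existing privacy proof of the Bernstein mechanism, which is exactly your reduction---sup-norm closeness of $f(D)$ and $f(D')$ gives $\ell_1$ sensitivity $(k+1)^\ell\Delta$ for the $(k+1)^\ell$ lattice evaluations, Laplace noise at scale $(k+1)^\ell\Delta/\epsilon$ handles that, and polynomial reconstruction is data-independent post-processing. No discrepancies to flag.
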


Our framework does not apply directly to the objective perturbation mechanism
of \citet{chaudhuri2011differentially}, as that mechanism does not rely directly
on a notion of sensitivity of objective function, classifier, or otherwise. However
it can apply to the posterior
sampler used for differentially-private Bayesian
inference~\cite{mir2012differentially,dimitrakakis2014robust,dimitrakakis2014robustNew,zhang2016differential}:
there the target function $f: \DBs\to\normed$ returns the likelihood function
$p(D|\cdot)$, itself mapping parameters $\Theta$ to $\reals$; using the
result of $f(D)$ and public prior $\xi(\theta)$, the mechanism samples from the
posterior $\xi(B|D)=\int_B p(D|\theta)d\xi(\theta) / \int_\Theta p(D|\theta)d\xi(\theta)$;
differential privacy follows from a Lipschitz condition on $f$ that would require
our sensitivity sampler to sample from all database pairs---a minor modification
left for future work.

\begin{algorithm}[tb]
\caption{\sampler}
   \label{algo:sampler}
\begin{algorithmic}
	\STATE {\bfseries Input:}
	database size $n$,
	target mapping $f:\DBs\to\normed$,
%	RDP confidence $\gamma$,
	sample size $m$,
	order statistic index $k$,
%	approximation confidence $\rho$,
	distribution $P$
   \FOR{$i=1$ {\bfseries to} $m$}
   \STATE Sample $D \sim P^{n+1}$
   \STATE Set $G_i = \norm{f\left(D_{1\ldots n}\right) - f\left(D_{1\ldots n-1, n+1}\right) }$
   \ENDFOR
   \STATE Sort $G_1,\ldots,G_m$ as $G_{(1)}\leq\ldots\leq G_{(m)}$
   %\STATE Set $\rho = \exp\left(W_{-1}\left(-\frac{\gamma}{2\sqrt{e}}\right)+\frac{1}{2}\right)$
   %\STATE Set $k = \left\lceil m\left( 1 -\gamma +\rho + \sqrt{\log(1/\rho) / (2m)} \right) \right\rceil$
   \STATE {\bfseries return} $\hat{\Delta} = G_{(k)}$
\end{algorithmic}
\end{algorithm}

\section{The Sensitivity Sampler}
\label{sec:algorithm}

Algorithm~\ref{algo:sampler} presents the \sampler in detail.
Consider privacy-insensitive independent sample $D_{1},\ldots,D_{m}\sim P^{n+1}$
of databases on $n+1$ records, where $P$ is chosen to match the desired
distribution in definition of random differential privacy. A number of natural
choices are available for $P$ (\cf Remark~\ref{rem:P}). The main idea of \sampler
is that for each extended-database observation of $D\sim P^{n+1}$, we induce
i.i.d. observations $G_{1},\ldots,G_{m}\in\reals$ of the random variable
\begin{eqnarray*}
	G & = & \norm{f\left(D_{1\ldots n}\right)-f\left(D_{1\ldots n-1;n+1}\right)} \enspace.
\end{eqnarray*}
From these observations of the sensitivity of target mapping $f:\DBs\to\normed$,
we estimate w.h.p. sensitivity that can achieve random differential
privacy, for the full suite of sensitivity-induced private mechanisms
discussed above.

\begin{algorithm}[t!]
	\caption{\samplermech}
	\label{algo:mechanism}
	\begin{algorithmic}
		\STATE {\bfseries Input:}
		database $D$;
		randomised mechanism $M_\Delta: \normed\to\responses$;
		target mapping $f:\DBs\to\normed$,
%		RDP confidence $\gamma$;
		sample size $m$,
		order statistic index $k$,
%		approximation confidence $\rho$,
		distribution $P$
		%\STATE Set $\hat{\Delta}$ to \sampler$(|D|, f, \gamma, m, k, \rho, P)$
		\STATE Set $\hat{\Delta}$ to \sampler$(|D|, f, m, k, P)$
		\STATE {\bfseries respond} $M_{\hat{\Delta}}(D)$
	\end{algorithmic}
\end{algorithm}

If we knew the full CDF of $G$, we would simply invert
this CDF to determine the level of sensitivity for achieving any desired $\gamma$ level of
random differential privacy: higher confidence would invoke higher sensitivity
and therefore lower utility. However as we cannot in general
possess the true CDF, we resort to uniformly approximating it w.h.p. using the
empirical CDF induced by the sample $G_1,\ldots,G_m$. The guarantee of uniform
approximation derives from empirical process theory. Figure~\ref{fig:samplerhowto}
provides further intuition behind \sampler. 
Algorithm~\ref{algo:mechanism} presents \samplermech which composes \sampler with
any sensitivity-induced differentially-private mechanism. 

\begin{figure}[b!]
\begin{center}
\vspace{-1em}
\centerline{\includegraphics[width=0.80\columnwidth]{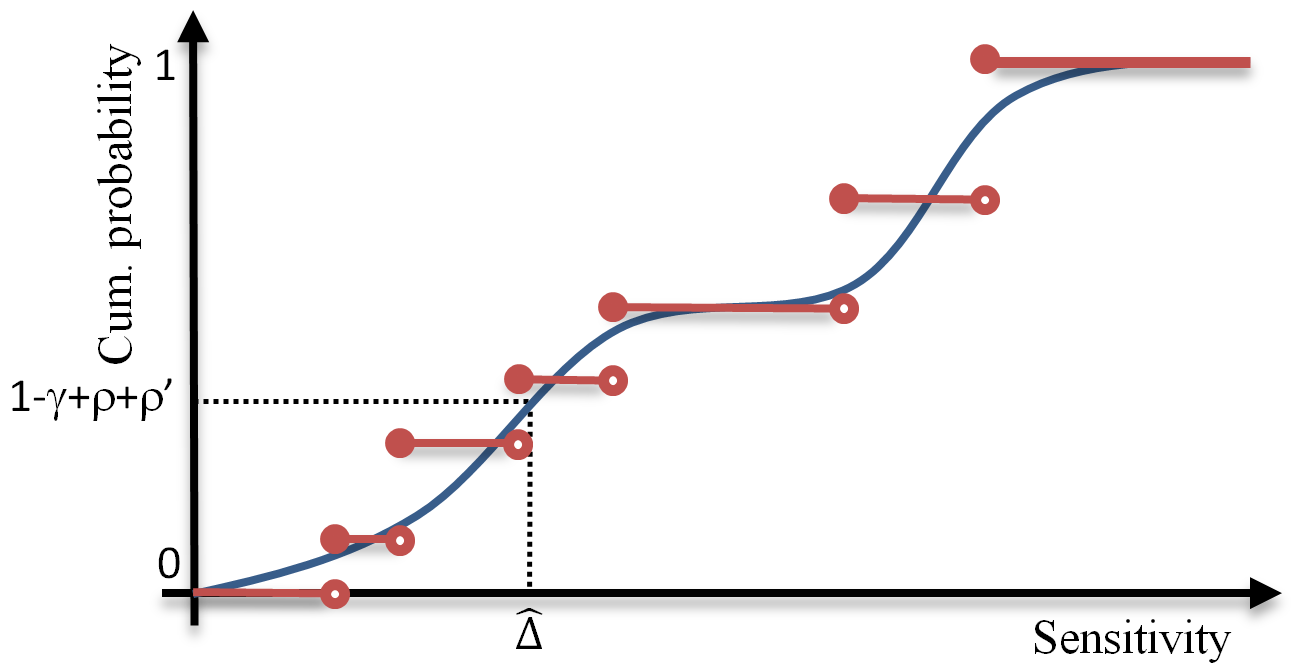}}
\caption{Inside \sampler: the true sensitivity CDF (blue); empirical sensitivity CDF (piecewise constant red); inversion of the empirical CDF (black dotted); where $\rho,\rho'$ are the DKW confidence and errors defined in Theorem~\ref{thm:main-rdp}.}
\label{fig:samplerhowto}
\vspace{-1em}
\end{center}
\end{figure}

Our main result Theorem~\ref{thm:main-rdp} presents explicit expressions for parameters
$m, k$ that are sufficient to guarantee that \samplermech achieves
$(\epsilon,\delta,\gamma)$-random differential privacy. Under that result the parameter
$\rho$, which controls the uniform approximation of the empirical CDF from
$G_1,\ldots,G_m$ sample to the true CDF, is introduced as a free parameter.
We demonstrate through a series of optimisations in 
\ifdefined\ARXIV Corollaries~\ref{cor:min-m}--\ref{cor:min-k} \else Table~\ref{tab:tuning} \fi
how $\rho$ can be tuned to optimise either sampling effort $m$, utility via order statistic
index $k$, or privacy confidence $\gamma$. These alternative explicit choices for $\rho$
serve as optimal operating points for the mechanism.

\subsection{Practicalities}

\sampler simplifies the application of differential privacy by obviating
the challenge of bounding sensitivity. As such, it is important to
explore any practical issues arising in its implementation. 
The algorithm itself involves few main stages: sampling databases, measuring sensitivity,
sorting, order statistic lookup (inversion), followed by the sensitivity-induced private
mechanism.

\myparagraph{Sampling.} As discussed in Remark~\ref{rem:P}, a number of natural choices
for sampling distribution $P$ could be made. Where a simulation process exists, capable
of generating synthetic data approximating $D$, then this could be run.
For example in the Bayesian setting~\cite{dimitrakakis2014robust}, one could use a public
conditional likelihood $p(\cdot|\theta)$, parametric family $\Theta$, prior $\xi(\theta)$
and sample from the marginal $\int_\Theta p(x|\theta)d\xi(\theta)$. Alternatively, it may
suffice to sample from the uniform distribution on \domain, or Gaussian restricted to
Euclidean \domain. In any of these cases, sampling is relatively straightforward and the
choice should consider meaningful random differential privacy guarantees relative to $P$.

\myparagraph{Sensitivity Measurement.} A trivial stage, given neighbouring databases,
measurement could involve expanding a mathematical expression representing a target
function, or a computer program such as running a deep learning or computer vision
open-source package. For some targets, it may be that running first on one database,
covers much of the computation required for the neighbouring database in which case
amortisation may improve runtime. The cost of sensitivity measurement will be primarily
determined by sample size $m$. Note that sampling and measurement
can be trivially parallelised over map-reduce-like platforms.

\myparagraph{Sorting, Inversion.} Strictly speaking the entire sensitivity sample
need not be sorted, as only one order statistic is required. That said, sorting
even millions of scalar measurements can be accomplished in under a second on a stock
machine. An alternative strategy to inversion as presented, is to take the \emph{maximum
sensitivity measured} so as to maximise privacy without consideration to utility.

\myparagraph{Mechanism.} It is noteworthy that in settings where mechanism $M_\Delta$
is to be run multiple times, the estimation of $\hat{\Delta}$ need not be redone.
As such \sampler could be performed entirely in an offline amortisation stage.

\begin{table*}[t]
	\caption{Optimal $\rho$ operating points for budgeted resources---$\gamma$ or $m$---minimising $m$, $\gamma$ or $k$; proved in \ifdefined\ARXIV Appendix~\ref{app:optimising}. \else \cite{AR2017}. \fi \label{tab:tuning}}
\begin{tabular}{cccccc}
\hline
Budgeted & Optimise & $\rho$ & $\gamma$ & $m$ & $k$ \\
\hline
$\gamma\in(0,1)$ & $m$ & $\exp\left(W_{-1}\left(-\frac{\gamma}{2\sqrt{e}}\right)+\frac{1}{2}\right)$ & $\bullet$ & $\left\lceil\frac{\log\left(\frac{1}{\rho}\right)}{2(\gamma-\rho)^2}\right\rceil$ & $\left\lceil m\left( 1 -\gamma +\rho + \sqrt{\frac{\log\left(\frac{1}{\rho}\right)}{2m}} \right)\right\rceil$ \\
$m\in\naturals, \gamma$ & $k$ & $\exp\left(\frac{1}{2}W_{-1}\left(-\frac{1}{4m}\right)\right)$ & $\geq \rho + \sqrt{\frac{\log\left(\frac{1}{\rho}\right)}{2m}}$ & $\bullet$ & $\left\lceil m\left( 1 -\gamma +\rho + \sqrt{\frac{\log\left(\frac{1}{\rho}\right)}{2m}} \right)\right\rceil$ \\
$m\in\naturals$ & $\gamma$ & $\exp\left(\frac{1}{2}W_{-1}\left(-\frac{1}{4m}\right)\right)$ & $\rho + \sqrt{\frac{\log\left(\frac{1}{\rho}\right)}{2m}}$ & $\bullet$ & $m$ \\
\hline
\end{tabular}
\end{table*}

\section{Analysis}
\label{sec:theory}

For the i.i.d. sample of sensitivities $G_1,\ldots,G_m$ drawn within Algorithm~\ref{algo:sampler},
denote the corresponding fixed unknown CDF, and corresponding random empirical CDF, by
\begin{eqnarray*}
\cdf{g} &=& \Pr{G\leq g}\enspace, \\
\ecdf{m}{g} &=& \frac{1}{m}\sum_{i=1}^{m}\indic{G_{i}\leq g}\enspace.
\end{eqnarray*}

In this section we use \ecdf{m}{\Delta} to bound the likelihood of a (non-private,
possibly deterministic) mapping $f:\DBs\to\responses$ achieving 
sensitivity $\Delta$. This permits bounding RDP.

\begin{thm}\label{thm:main-rdp}
Consider any non-private mapping $f: \DBs\to\normed$, any sensitivity-induced
$(\epsilon,\delta)$-differentially private mechanism $M_\Delta$ mapping $\normed$
to (randomised) responses in \responses, any database $D$ of $n$ records,
privacy parameters $\epsilon>0$, $\delta\in[0,1]$, $\gamma\in(0,1)$, and sampling
parameters size $m\in\naturals$, order statistic index $m\geq k\in\naturals$,
approximation confidence $0<\rho<\min\{\gamma, 1/2\}$, distribution $P$ on $\domain$. If
\begin{eqnarray}
	m &\geq& \frac{1}{2(\gamma-\rho)^2} \log\left(\frac{1}{\rho}\right)\enspace, \label{thm:main-rdp-condition} \\
	k &\geq& m\left( 1 -\gamma +\rho + \sqrt{\log(1/\rho) / (2m)} \right) \enspace, \label{thm:main-rdp-condition-k}
\end{eqnarray}
then Algorithm~\ref{algo:mechanism} run with
$D, M_\Delta, f, m, k, P$, preserves
$(\epsilon,\delta,\gamma)$-random differential privacy.
\end{thm}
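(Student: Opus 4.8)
The plan is to reduce the random-differential-privacy claim to a single tail bound on an order statistic and then discharge that bound with the one-sided Dvoretzky--Kiefer--Wolfowitz (DKW) inequality. First note that inside \sampler the $m$ independent databases drawn from $P^{n+1}$ yield i.i.d.\ scores $G_1,\dots,G_m$ distributed as the sensitivity variable $G$ with CDF $\cdf{\cdot}$, and that drawing one further sample from $P^{n+1}$ produces a neighbouring pair $D,D'$ (as in Definition~\ref{def:random}) contributing an independent copy $S:=\norm{f(D)-f(D')}$ of $G$; Algorithm~\ref{algo:mechanism} then sets $\hat\Delta=G_{(k)}$ and releases $M_{\hat\Delta}(f(D))$. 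Because $\hat\Delta$ is a function of the privacy-insensitive sample---independent of the database $D$ being privatised---one may condition on its realised value; for that fixed value $M_{\hat\Delta}$ is a fixed mechanism, and by sensitivity-induced $(\epsilon,\delta)$-differential privacy the inequality $\Pr{M_{\hat\Delta}(f(D))\in R}\le e^{\epsilon}\Pr{M_{\hat\Delta}(f(D'))\in R}+\delta$ holds for all measurable $R$ exactly whenever $S\le\hat\Delta$, the inner probabilities being over the randomisation of $M_\Delta$ alone. Consequently the $(\epsilon,\delta,\gamma)$-RDP conclusion follows once we show
\[
\Pr{S\le G_{(k)}}\ \ge\ 1-\gamma\,,
\]
the probability being over the pair $D,D'$ jointly with the sampler's i.i.d.\ draws.

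Second, I would bound the complementary event. Conditioning on $G_1,\dots,G_m$ and using that $S$ is independent of them, $\CExp{\indic{S>G_{(k)}}}{G_1,\dots,G_m}=1-\cdf{G_{(k)}}$, so $\Pr{S>G_{(k)}}=\Exp{1-\cdf{G_{(k)}}}$. Deterministically $\ecdf{m}{G_{(k)}}\ge k/m$, since at least $k$ of the $G_i$ fall at or below $G_{(k)}$. Put $\rho':=\sqrt{\log(1/\rho)/(2m)}$; the one-sided DKW inequality with Massart's constant gives $\Pr{\sup_g(\ecdf{m}{g}-\cdf{g})>\rho'}\le e^{-2m\rho'^2}=\rho$. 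On the complementary ``good'' event $\cdf{G_{(k)}}\ge\ecdf{m}{G_{(k)}}-\rho'\ge k/m-\rho'$, whereas on the bad event $1-\cdf{G_{(k)}}\le 1$ trivially; since $k\le m$ the quantity $1-k/m+\rho'$ is nonnegative, and combining these yields
\[
\Pr{S>G_{(k)}}\ \le\ \Bigl(1-\tfrac{k}{m}+\rho'\Bigr)+\rho\,.
\]

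Third, I would check that the hypotheses pin the right-hand side at $\gamma$. Condition~(\ref{thm:main-rdp-condition-k}), $k\ge m(1-\gamma+\rho+\rho')$, rearranges exactly to $1-k/m+\rho'+\rho\le\gamma$; and condition~(\ref{thm:main-rdp-condition}), $m\ge\log(1/\rho)/(2(\gamma-\rho)^2)$, is equivalent to $\rho'\le\gamma-\rho$, which is what makes the requirement $1\le k\le m$ compatible with (\ref{thm:main-rdp-condition-k}) (so that $G_{(k)}$ is a legitimate order statistic, the smallest admissible choice being $k=\lceil m(1-\gamma+\rho+\rho')\rceil$). Hence $\Pr{S\le G_{(k)}}\ge 1-\gamma$, and by the reduction above \samplermech preserves $(\epsilon,\delta,\gamma)$-random differential privacy; the pure-privacy case $\delta=0$ (with the corresponding $(\epsilon,\gamma)$-RDP conclusion) is identical.

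The step I expect to require the most care is the first: correctly bookkeeping the sampler's internal randomness against Definition~\ref{def:random}---folding it into the outer, data-generating randomness, equivalently conditioning on $\hat\Delta$---and justifying that the conditional upper tail of $S$ given the sample is exactly $1-\cdf{\cdot}$ evaluated at $G_{(k)}$, which hinges on the independence of $S$ from $G_1,\dots,G_m$. Once that is set up, the DKW estimate and the arithmetic matching $m$ and $k$ to (\ref{thm:main-rdp-condition})--(\ref{thm:main-rdp-condition-k}) are routine.
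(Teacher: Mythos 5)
Your proof is correct and follows essentially the same route as the paper's: the one-sided DKW/Massart bound linking the empirical CDF at the order statistic $G_{(k)}$ to the true sensitivity CDF, combined with sensitivity-induced DP and the independence of the fresh pair $D,D'$ from the sampler's draws, with conditions (\ref{thm:main-rdp-condition})--(\ref{thm:main-rdp-condition-k}) playing exactly the same roles. The only cosmetic difference is that you combine the DKW failure probability additively on the complementary event (arriving at $1-k/m+\rho'+\rho\le\gamma$), whereas the paper conditions multiplicatively via $(1-\gamma+\rho)(1-\rho)\ge 1-\gamma$; both land in the same place, and your hypothesis $\rho<1/2$ already guarantees the $\rho'\ge\sqrt{(\log 2)/(2m)}$ requirement for the one-sided Massart constant that the paper checks explicitly.
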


\begin{proof}
Consider any $\rho'\in(0,1)$ to be determined later, and consider sampling
$G_1,\ldots,G_m$ and sorting to $G_{(1)}\leq\ldots\leq G_{(m)}$. Provided
that
\begin{eqnarray}
	1 - \gamma +\rho + \rho' \leq 1 &\Leftrightarrow& \rho' \leq \gamma - \rho \enspace, \label{eq:thm-pf-feasible}
\end{eqnarray}
then the random sensitivity $\hat{\Delta}=G_{(k)}$, where
$k=\lceil m(1-\gamma+\rho+\rho')\rceil$, is the smallest $\Delta\geq 0$ such that
$\Phi_m(\Delta)\geq 1-\gamma+\rho+\rho'$. That is,
\begin{eqnarray}
	\Phi_m(\hat{\Delta}) &\geq& 1-\gamma+\rho+\rho'\enspace. \label{eq:thm-pf-invert}
\end{eqnarray}
Note that if $1-\gamma+\rho+\rho'<0$ then $\hat{\Delta}$ can be taken as any $\Delta$,
namely zero.
Define the events
\begin{eqnarray*}
	A_\Delta &=  &\left\{\forall R\subset\responses,\ \Pr{M_\Delta(f(D))\in R} \leq \exp(\epsilon) \cdot \right. \\
			&& \hspace{7em} \left. \Pr{M_\Delta(f(D'))\in R} + \delta\right\} \\
	B_{\rho'} &= & \left\{ \sup_{\Delta} \left(\Phi_m(\Delta) - \Phi(\Delta)\right) \leq \rho' \right\}\enspace.
\end{eqnarray*}
The first is the event that DP holds for a specific DB pair,
when the mechanism is run with (possibly random) sensitivity parameter $\Delta$;
the second records the empirical CDF uniformly one-sided approximating the CDF to
level $\rho'$.
By the sensitivity-induced $\epsilon$-differential privacy of $M_\Delta$,
\begin{eqnarray}
	\forall\Delta>0\ , && \Prs{D,D'\sim P^{n+1}}{A_\Delta}\ \geq\ \Phi(\Delta)\enspace. \label{eq:thm-pf-CDF}
\end{eqnarray}
The random $D,D'$ on the left-hand side induce the distribution on $G$ on the right-hand
side under which $\Phi(\Delta)=\Prs{G}{G\leq\Delta}$. The probability on the left
is the level of random differential privacy of $M_\Delta$ when run on fixed $\Delta$.
By the Dvoretzky-Kiefer-Wolfowitz inequality~\cite{dkw} we have that for all
$\rho'\geq \sqrt{(\log 2) / (2m)}$,
\begin{align}
	\Prs{G_1,\ldots,G_m}{B_{\rho'}}
\geq 1 - e^{-2 m \rho'^2}\enspace. \label{eq:thm-pf-dkw}
\end{align}
Putting inequalities~\eqref{eq:thm-pf-invert}, \eqref{eq:thm-pf-CDF}, and~\eqref{eq:thm-pf-dkw}
together, provided that $\rho'\geq \sqrt{(\log 2) / (2m)}$,
yields that
\begin{figure}[t]
\centering
\centerline{\includegraphics[width=1.00\columnwidth]{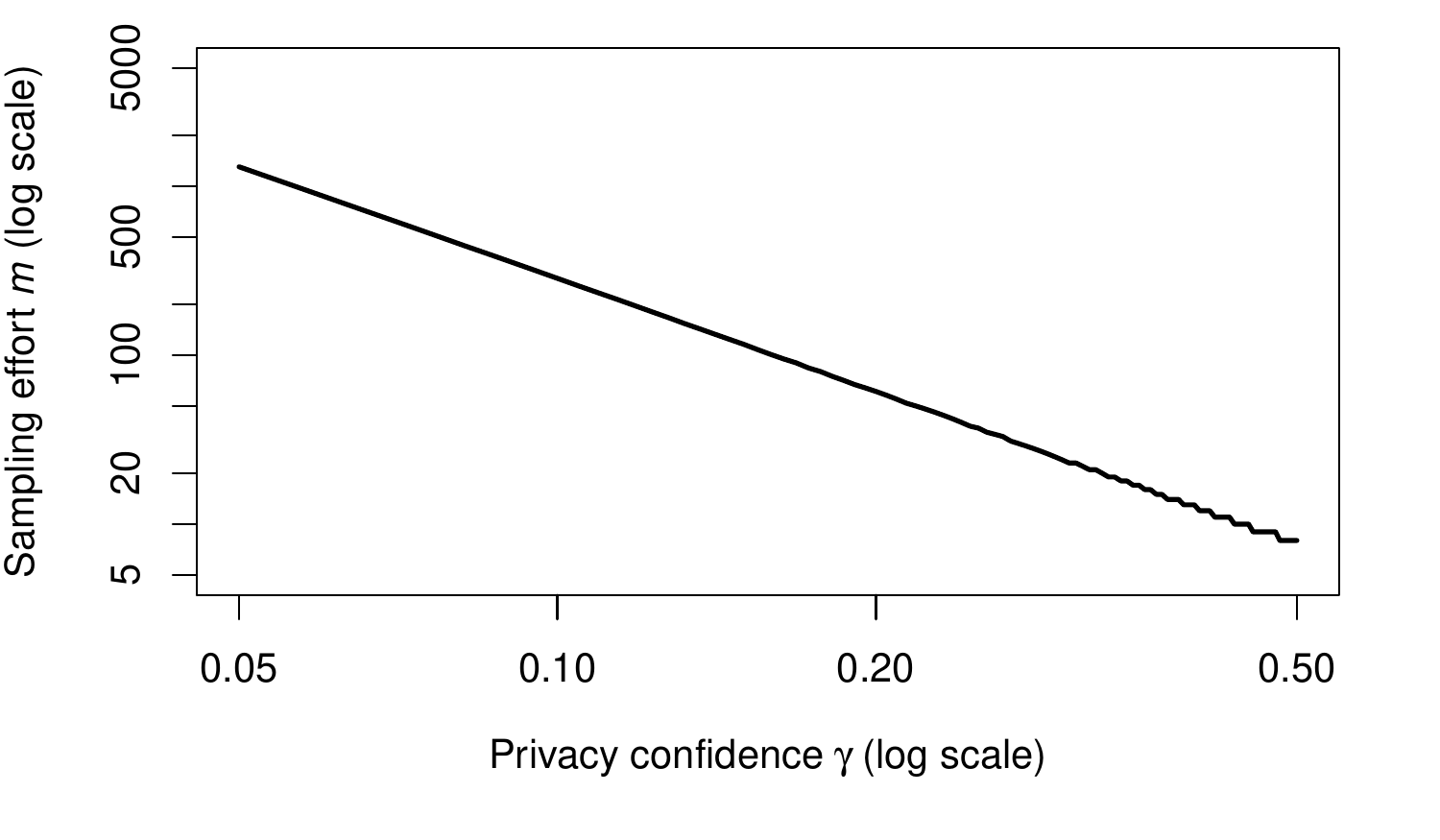}}
\caption{The minimum sample size $m$ (sampler effort) required to achieve various target RDP confidence levels $\gamma$.}
\label{fig:m-vs-gamma}
\end{figure}
\begin{figure}[t]
\centering
\centerline{\includegraphics[width=1.00\columnwidth]{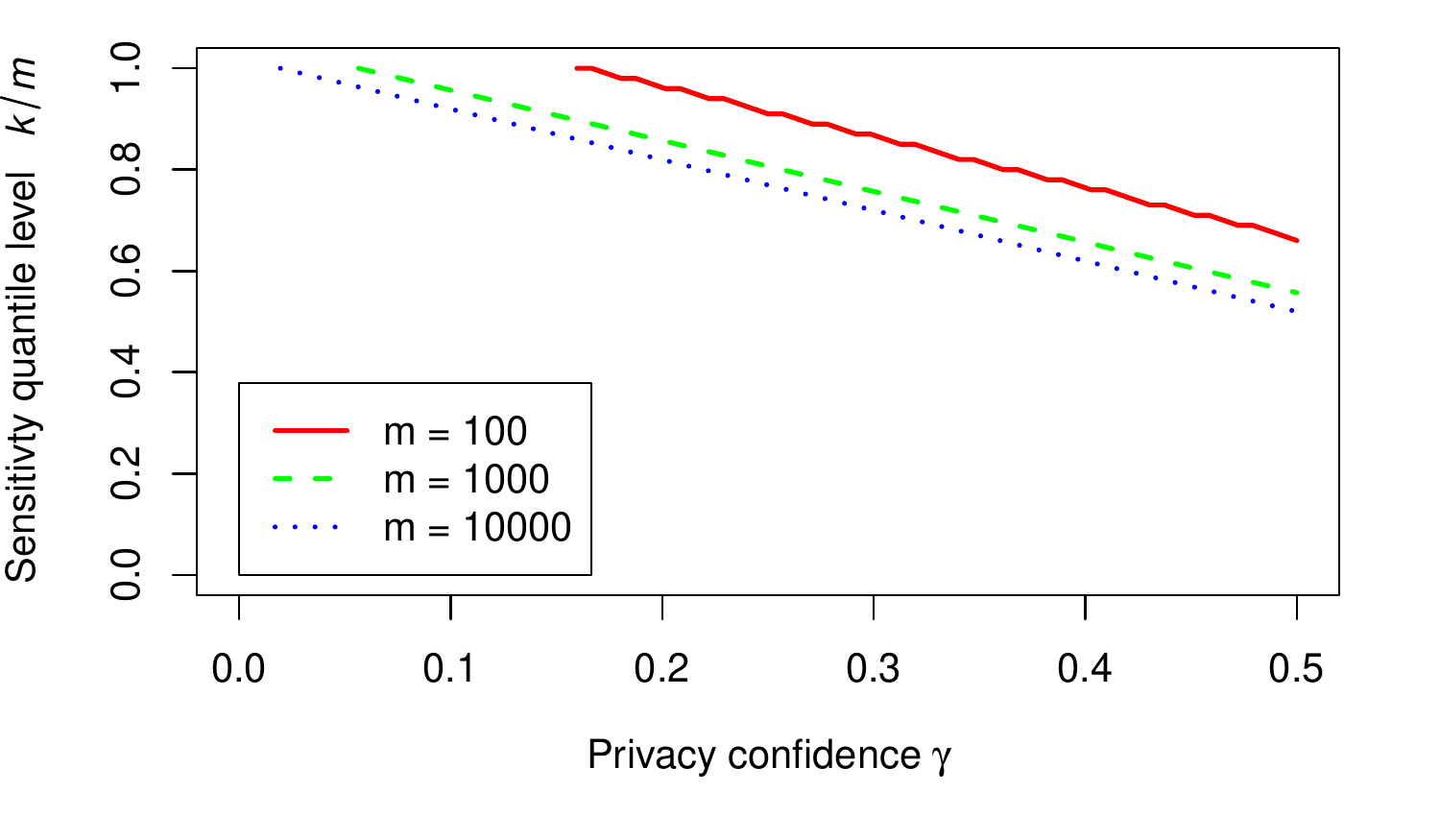}}
\caption{For sample sizes $m\in\{10^2, 10^3, 10^4\}$, trade-offs between privacy confidence level $\gamma$ and order-statistic index $k$ (relative to $m$) which controls sensitivity estimates and so utility.}
\label{fig:k-vs-gamma}
\end{figure}
\begin{align*}
	& \Prs{D,D',G_1,\ldots,G_m}{A_{\hat{\Delta}}} \\
	=& \CExp{\indic{A_{\hat{\Delta}}}}{B_{\rho'}}\Pr{B_{\rho'}} + \CExp{\indic{A_{\hat{\Delta}}}}{\overline{B_{\rho'}}}\Pr{\overline{B_{\rho'}}} \\
	\geq& \CExp{\Phi\left(\hat{\Delta}\right)}{B_{\rho'}} \Pr{B_{\rho'}} \\
	\geq& \CExp{\Phi_m\left(\hat{\Delta}\right) - \rho'}{B_{\rho'}}\left(1-\exp\left(-2m\rho'^2\right)\right) \\
	\geq& \left(1 - \gamma +\rho + \rho' - \rho'\right)\left(1-\exp\left(-2m\rho'^2\right)\right) \\
	\geq& (1 - \gamma + \rho)(1 - \rho) \\
	\geq& 1 - \gamma+\rho - \rho \\
	=& 1 - \gamma \enspace.
\end{align*}
The last inequality follows from $\rho<\gamma$; the penultimate inequality follows from setting
\begin{eqnarray}
	\rho' &\geq& \sqrt{\frac{1}{2m}\log\left(\frac{1}{\rho}\right)}\enspace,\label{eq:rhoprime}
\end{eqnarray}
and so the DKW condition \cite{dkw}, that $\rho'\geq \sqrt{(\log 2)/(2m)}$,
is met provided that $\rho\leq 1/2$. Now~\eqref{thm:main-rdp-condition}
follows from substituting~\eqref{eq:rhoprime} into~\eqref{eq:thm-pf-feasible}.
\end{proof}

Note that for sensitivity-induced $\epsilon$-differentially private mechanisms, the theorem
applies with $\delta=0$.

\myparagraph{Optimising Free Parameter $\boldsymbol{\rho}$.} 
Table~\ref{tab:tuning} recommends alternative choices of free parameter $\rho$, derived by
optimising the sampler's performance along one axis---privacy confidence $\gamma$, sampler
effort $m$, or order statistic index $k$---given a fixed budget of another. 
The table summarises results with proofs found in 
\ifdefined\ARXIV Appendix~\ref{app:optimising}. \else report~\cite{AR2017}. \fi
The specific expressions derived involve branches of the Lambert-$W$ function,
which is the inverse relation of the function $f(z)=z\exp(z)$, and is implemented as a
special function in scientific libraries as standard. While Lambert-$W$ is in general a multi-valued
relation on the analytic complex domain, all instances in our results are single-real-valued
functions on the reals. The next result presents the first operating point's corresponding
rate on effort in terms of privacy, and follows from recent bounds on the secondary branch $W_{-1}$ due to \citet{lambertBound}.\footnote{That for all $u>0$, $-1-\sqrt{2u}-u<W_{-1}(-e^{-u-1}) < -1-\sqrt{2u}-\frac{2}{3}u$.}

\begin{cor}
Minimising $m$ for given $\gamma$ (\cf Table~\ref{tab:tuning}, row 1%
\ifdefined\ARXIV; Corollary~\ref{cor:min-m}, Appendix~\ref{app:optimising}), \else), \fi 
yields rate for 
$m$ as $o\left(\frac{1}{\gamma^2}\log\frac{1}{\gamma}\right)$ with increasing
privacy confidence $\frac{1}{\gamma}\to\infty$.
\end{cor}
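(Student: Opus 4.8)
The plan is to collapse the two closed forms in row~1 of Table~\ref{tab:tuning} into a single transparent expression for the optimised sample size, and then read off its growth from the $W_{-1}$ bounds quoted in the footnote. Write $w=W_{-1}(-\gamma/(2\sqrt e))$, so that the recommended confidence parameter is $\rho=\exp(w+\tfrac12)$ and (ignoring the ceiling, which changes $m$ by at most $1$) $m=\log(1/\rho)/\bigl(2(\gamma-\rho)^2\bigr)$. First I would exploit the defining identity $we^{w}=-\gamma/(2\sqrt e)$: multiplying by $e^{1/2}$ gives $e^{w+1/2}=-\gamma/(2w)$, i.e. $\rho=\gamma/(2|w|)$ with $|w|=-w>1$. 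This is the crucial simplification, since it eliminates the exponential $e^{w}$ that would otherwise sit inside $(\gamma-\rho)^2$. Substituting, $\log(1/\rho)=|w|-\tfrac12$ and $\gamma-\rho=\gamma(2|w|-1)/(2|w|)$, so after cancelling the common factor $(2|w|-1)^2=4(|w|-\tfrac12)^2$ one obtains the clean closed form
\[
 m \;=\; \frac{|w|^2}{2\gamma^2\,(|w|-\tfrac12)}\;\sim\;\frac{|w|}{2\gamma^2}\qquad(|w|\to\infty)\enspace.
\]

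Second, I would control $|w|$ through the footnote's estimate. Setting $u=\log(2/\gamma)-\tfrac12$ puts the argument of $W_{-1}$ in the form $-e^{-u-1}$, with $u\to\infty$ and $u=\log(1/\gamma)+O(1)$ as $\gamma\to0$. The footnote then gives $|w|=-W_{-1}(-e^{-u-1})<1+\sqrt{2u}+u$, hence $|w|\le\log(1/\gamma)+O\bigl(\sqrt{\log(1/\gamma)}\,\bigr)$. Since $x\mapsto x^2/(x-\tfrac12)$ is increasing for $x>1$, plugging this upper bound into the displayed closed form yields $m\le \bigl(\log(1/\gamma)+O(\sqrt{\log(1/\gamma)})\bigr)/(2\gamma^2)$, i.e. the optimised sample size grows at the order $O\!\left(\frac1{\gamma^2}\log\frac1\gamma\right)$ recorded by the corollary. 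Only the one-sided (upper) Lambert-$W$ bound enters, so no two-sided estimate of $|w|$ is required.

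The hard part will be the algebraic cancellation in the first step, and it is genuinely load-bearing. Attacking $m$ through $\rho$ directly is hopeless: the footnote pins $\log(1/\rho)=|w|-\tfrac12$ only to additive accuracy $\sqrt{2u}$, which is multiplicative accuracy $e^{\pm\sqrt{2u}}$ in $\rho$ itself, and squaring $\rho$ in the denominator would blow the estimate up by a factor $e^{2\sqrt{2u}}$, far exceeding the target rate; rewriting $\rho$ as $\gamma/(2|w|)$ before bounding is exactly what lets the crude leading-order bracket for $|w|$ suffice. I would also flag one honest subtlety for the write-up: the same footnote gives $|w|\ge 1+\sqrt{2u}+\tfrac23 u$, so in fact $m$ is of order $\frac1{\gamma^2}\log\frac1\gamma$ in both directions with sharp leading constant $\tfrac12$. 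The $o(\cdot)$ in the statement is thus to be read as the order of growth (an upper rate on the sampler effort), the provable content being the $O(\cdot)$ bound derived above rather than a literally vanishing ratio.
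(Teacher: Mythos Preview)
Your argument is correct and follows exactly the route the paper gestures at: the paper offers no proof beyond the sentence that the rate ``follows from recent bounds on the secondary branch $W_{-1}$'' together with the footnoted inequality, and you have faithfully carried that out---the key algebraic identity $\rho=\gamma/(2|w|)$ and the resulting closed form $m=|w|^2/\bigl(2\gamma^2(|w|-\tfrac12)\bigr)$ are precisely what makes the footnote's crude bracket on $|w|$ sufficient. Your final caveat is also on target: the computation shows $m\sim\tfrac12\gamma^{-2}\log(1/\gamma)$, so the corollary's $o(\cdot)$ cannot hold literally and is best read as an $O(\cdot)$ rate (the paper does not address this).
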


\begin{rem}
Theorem~\ref{thm:main-rdp} and Table~\ref{tab:tuning} elucidate that effort, privacy and
utility are in tension. Effort is naturally decreased by reducing the confidence level of RDP
($\rho$ chosen to minimise $m$, or $\gamma$). By minimising order statistic index $k$, we 
select smaller $G_k$ and therefore sensitivity estimate $\hat{\Delta}$. This in turn leads
to lower generic mechanism noise and higher utility. All this is achieved by sacrificing 
effort or privacy confidence. As usual, sacrificing $\epsilon$ or $\delta$ privacy levels
also leads to utility improvement. 
%While Theorem~\ref{thm:main-rdp} appears to suggest that any level of random differential
%privacy can be achieved, with any level of confidence $\rho$---that \sampler provides
%privacy without cost to utility---there are in fact several sources of trade-off.
%First, Condition~\eqref{thm:main-rdp-condition} states that desired
%$(\epsilon,\delta,\gamma)$-random privacy will be achieved w.h.p. provided that either
%$m$ or $\rho$ are sufficiently large (\cf Figure~\ref{fig:m-vs-rho}).
%Second, to achieve superior utility we run our mechanisms with lower sensitivity estimate
%$\hat{\Delta}$ which results from inverting monotonic $\Phi_m(\cdot)$ at a lower level
%$1-\gamma+\rho'$, achieved if either sampling size $m$ is high or sampling confidence is
%low (high $\rho$).
%The effects of existing privacy parameters $\epsilon,\delta,\gamma$ on utility are as usual.
Figures~\ref{fig:m-vs-gamma} and~\ref{fig:k-vs-gamma} visualise these operating points.
\end{rem}

Less conservative estimates on sensitivity can lead to
superior utility while also enjoying easier implementation. This hypothesis is borne out
in experiments in Section~\ref{sec:experiments}.

\begin{prop}
	For any $f:\DBs\to\normed$ with global sensitivity $\overline{\Delta}=\sup_{D\sim D'} \|f(D)-f(D')\|_{\normed}$,
	\sampler's random sensitivity $\hat{\Delta}\leq\overline{\Delta}$. As a result, Algorithm~\ref{algo:mechanism}
	run with any of the sensitivity-induced private mechanisms of Corollaries~\ref{cor:laplace}--\ref{cor:bernstein}
	achieves utility dominating that of the respective mechanisms run with $\overline{\Delta}$.
\end{prop}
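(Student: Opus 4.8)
The plan is to treat the two conclusions separately, the first being essentially immediate and the second reducing to monotonicity of each generic mechanism in its sensitivity parameter.

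\textbf{Step 1: the pathwise bound $\hat\Delta\le\overline\Delta$.} For each iteration $i$ of \sampler, the draw $D\sim P^{n+1}$ produces the pair $\left(D_{1\ldots n},\, D_{1\ldots n-1,n+1}\right)$, which agrees on the first $n-1$ records and differs only in the last record; hence it is a pair of neighbouring databases in $\domain^n$. By the definition of global sensitivity, which takes the supremum of $\norm{f(\cdot)-f(\cdot)}$ over exactly such pairs, $G_i=\norm{f\left(D_{1\ldots n}\right)-f\left(D_{1\ldots n-1,n+1}\right)}\le\overline\Delta$. This holds for every $i$ and every realisation of the sampler, so every order statistic $G_{(1)}\le\cdots\le G_{(m)}$ is at most $\overline\Delta$, and in particular $\hat\Delta=G_{(k)}\le\overline\Delta$. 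Note this uses nothing from Theorem~\ref{thm:main-rdp} and is valid for any choice of $m$ and $k\le m$.

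\textbf{Step 2: utility domination.} Fix a utility notion that is monotone under stochastic dominance of the release's error magnitude (for instance expected $\|\cdot\|_1$ or $\|\cdot\|_2$ error of the release, or the standard high-probability error bounds accompanying each mechanism). Each of Corollaries~\ref{cor:laplace}--\ref{cor:bernstein} calibrates noise monotonically in $\Delta$: the Laplace mechanism adds $\Lap{\v{0},\Delta/\epsilon}$, whose coordinates scale linearly in $\Delta$; the Gaussian mechanism's per-coordinate standard deviation scales linearly in $\Delta$; the exponential mechanism has response density proportional to $\exp\!\left(\epsilon\,(f(D))(r)/(2\Delta)\right)$, so increasing $\Delta$ raises the temperature and flattens the density toward uniform, decreasing mass on high-score responses; and the Bernstein mechanism's per-lattice-point Laplace scale $\Delta(k+1)^{\ell}/\epsilon$ is again linear in $\Delta$. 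In each case one can couple the run at a realised value $\hat\Delta=\delta_0$ with the run at $\overline\Delta$ — rescaling a common standard noise vector, or using inverse-CDF coupling for the exponential mechanism — so that, conditioned on $\hat\Delta=\delta_0\le\overline\Delta$, the release error at $\delta_0$ is pointwise (hence in distribution) no larger than at $\overline\Delta$. Since $\hat\Delta\le\overline\Delta$ for every realisation (Step 1), this conditional domination holds on every path; mixing over the distribution of $\hat\Delta$ preserves it, so \samplermech's release error is stochastically dominated by that of the corresponding mechanism run with $\overline\Delta$, giving the claimed utility domination.

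\textbf{Main obstacle.} There is no hard analysis; the one point needing care is the meaning of ``utility dominating'' when $\hat\Delta$ is itself random. The cleanest route is to phrase domination at the level of the conditional error distribution given $\hat\Delta$, observe it holds pathwise by Step 1, and then note that mixing over $\hat\Delta$ keeps stochastic dominance — avoiding any delicate exchange of expectation over the sampler's randomness. The remaining work, recording that Corollaries~\ref{cor:laplace}--\ref{cor:bernstein} are each monotone in $\Delta$ in the relevant sense, is routine.
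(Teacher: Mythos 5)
Your proposal is correct and matches the argument the paper intends: the paper states this proposition without any explicit proof, treating Step 1 (each $G_i$ is the sensitivity of an actual neighbouring pair, hence at most the supremum $\overline{\Delta}$, so every order statistic and in particular $\hat{\Delta}=G_{(k)}$ is bounded by $\overline{\Delta}$) as immediate, and the utility claim as following from the standard fact that each mechanism's noise scale, and hence its utility guarantee, is monotone in $\Delta$. Your Step 2 formalisation via pathwise/conditional stochastic dominance is a sensible way to make the informal ``utility dominating'' claim precise and goes slightly beyond what the paper records, but it is the same underlying idea.
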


\begin{figure}[t]
\centering
\centerline{\includegraphics[width=1.00\columnwidth]{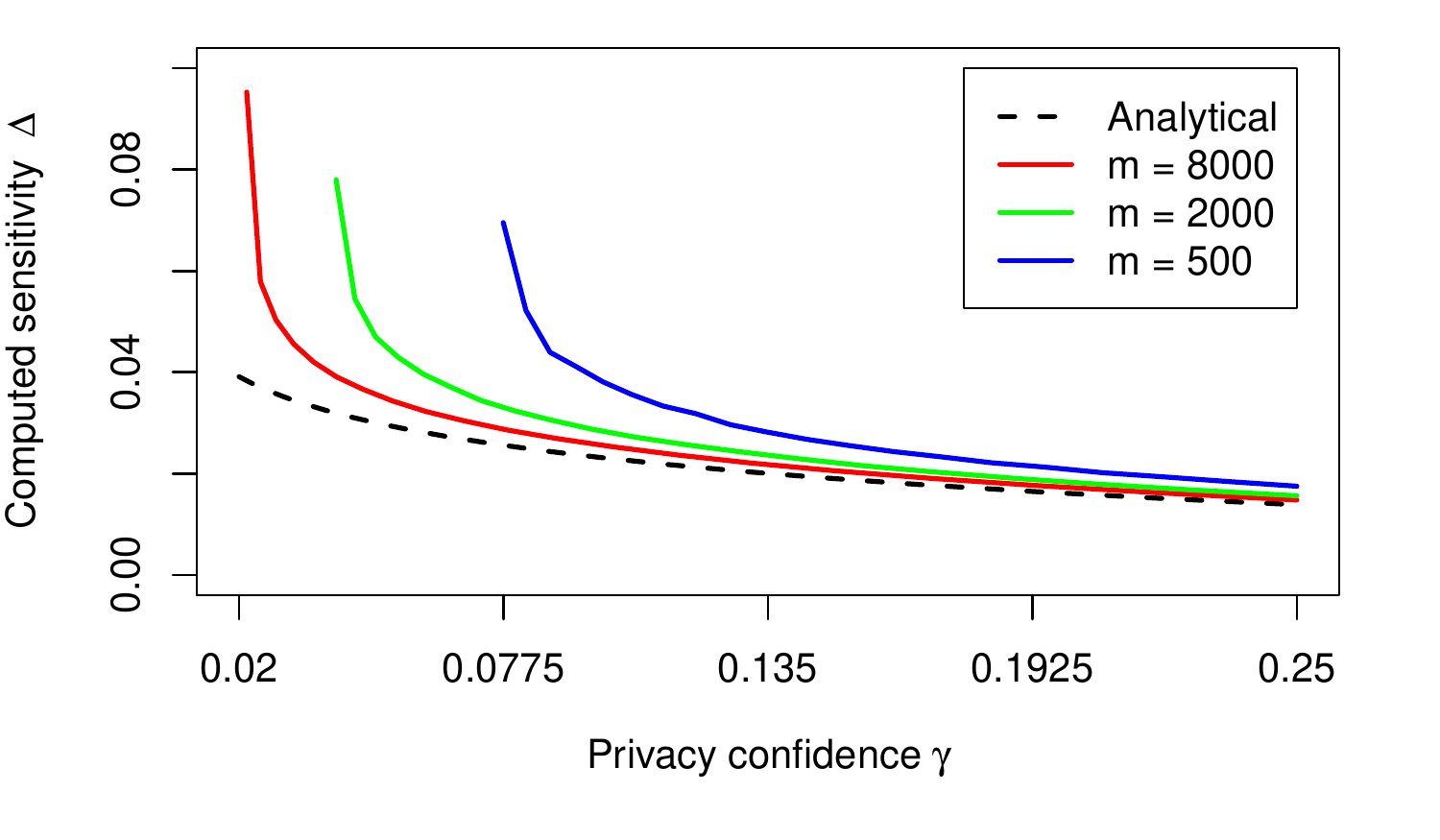}}
\caption{Analytical vs estimated sensitivity for Example~\ref{ex:sample-mean}.}
\label{fig:analytical_vs_sampled}
\end{figure}

\section{Experiments}
\label{sec:experiments}

We now demonstrate the practical
value of \sampler. First in Section~\ref{sec:analytical-vs-sampled} we illustrate
how \sampler sensitivity quickly approaches analytical high-probability sensitivity,
and how it can be significantly lower than worst-case global sensitivity in
Section~\ref{sec:expt-sensitivity}. Running privatising
mechanisms with lower sensitivity parameters can mitigate utility loss,
while maintaining (a weaker form of) differential privacy. We present
experimental evidence of this utility savings in Section~\ref{sec:expt-utility}.
While application domains may find the alternate balance towards utility appealing
by itself, it should be stressed
that a significant advantage of \sampler is its ease of implementation.

\subsection{Analytical RDP vs. Sampled Sensitivity}\label{sec:analytical-vs-sampled}

Consider running Example~\ref{ex:sample-mean}: private release of sample mean
$f(D)=n^{-1}\sum_{i=1}^n D_i$ of a database $D$ drawn i.i.d. from $\mathrm{Exp}(1)$.
Figure~\ref{fig:analytical_vs_sampled} presents, for varying probability $\gamma$:
the analytical bound on sensitivity versus \sampler estimates for different sampling
budgets averaged over 50 repeats. For fixed sampling budget, $\hat{\Delta}$ is
estimated at lower limits on $\gamma$, quickly converging to exact.

\subsection{Global Sensitivity vs. Sampled Sensitivity}\label{sec:expt-sensitivity}

\begin{figure}[t]
\centering
\centerline{\includegraphics[width=1.00\columnwidth]{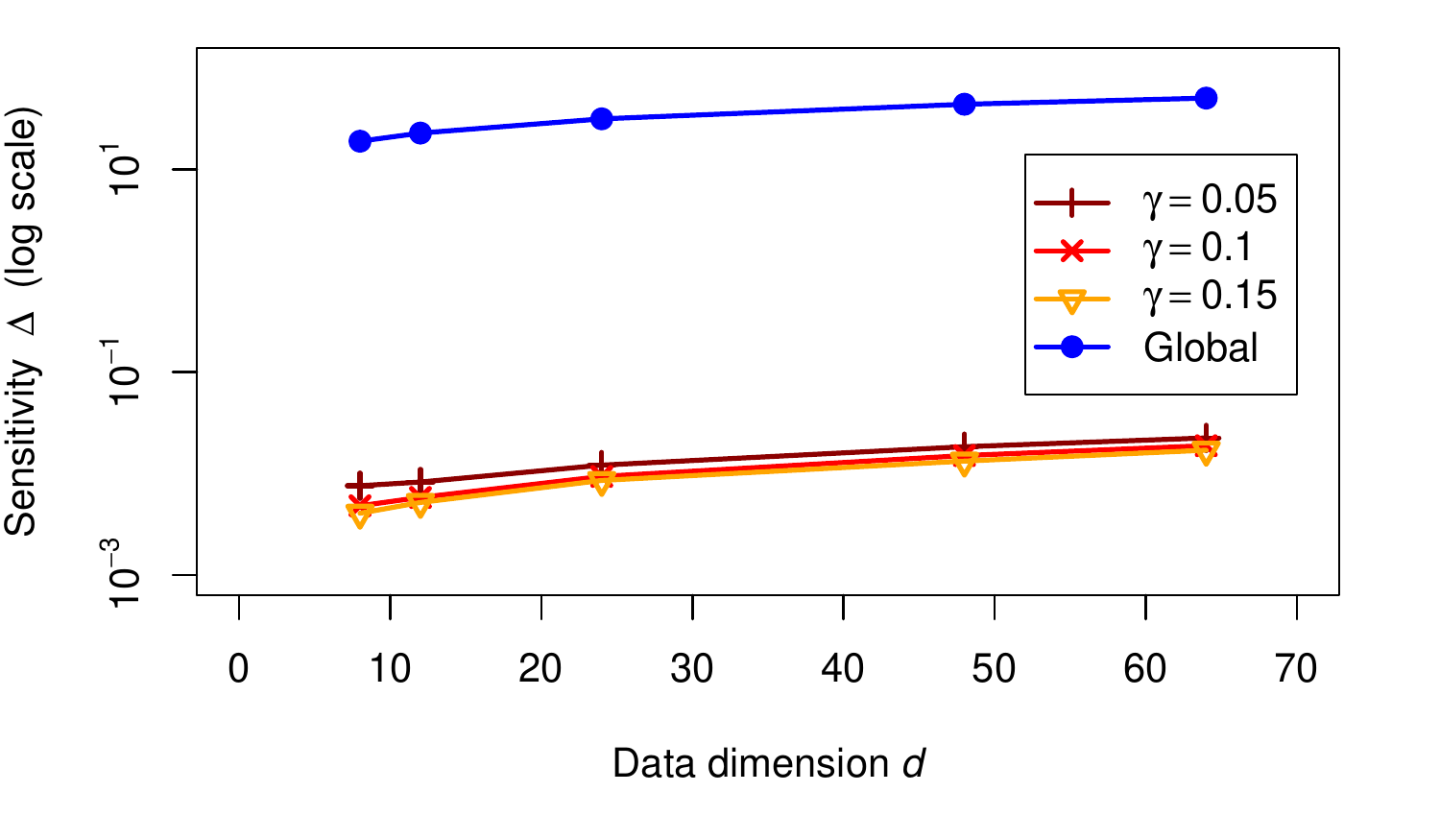}}
\caption{Global vs sampled sensitivity for linear SVM.}
\label{fig:sens_svm_2}
\end{figure}

%\myparagraph{Support Vector Classification.}
Consider now the challenging
goal of privately releasing an SVM classifier fit to sensitive training data.
In applying the Laplace mechanism to releasing the primal normal vector,
\citet{rubinstein2012learning} bound the vector's sensitivity using
algorithmic stability of the SVM. In particular, a lengthy derivation establishes
that $\|\v{w}_{D} - \v{w}_{D'}\|_1\leq 4 L C\kappa\sqrt{d}/n$ for a statistically
consistent formulation of the SVM with convex $L$-Lipschitz loss, $d$-dimensional
feature mapping with $\sup_{\v{x}} k(\v{x},\v{x})\leq\kappa^2$, and regularisation
parameter $C$. While the original work (and others since) did not consider the
practical problem of releasing unregularised bias term $b$, we can effectively
bound this sensitivity via a short argument in 
\ifdefined\ARXIV Appendix~\ref{sec:SVM-proof}. \else full report~\cite{AR2017}. \fi

\begin{figure*}[t!]
	\begin{minipage}[t]{1\columnwidth}
\centering
\centerline{\includegraphics[width=1.00\columnwidth]{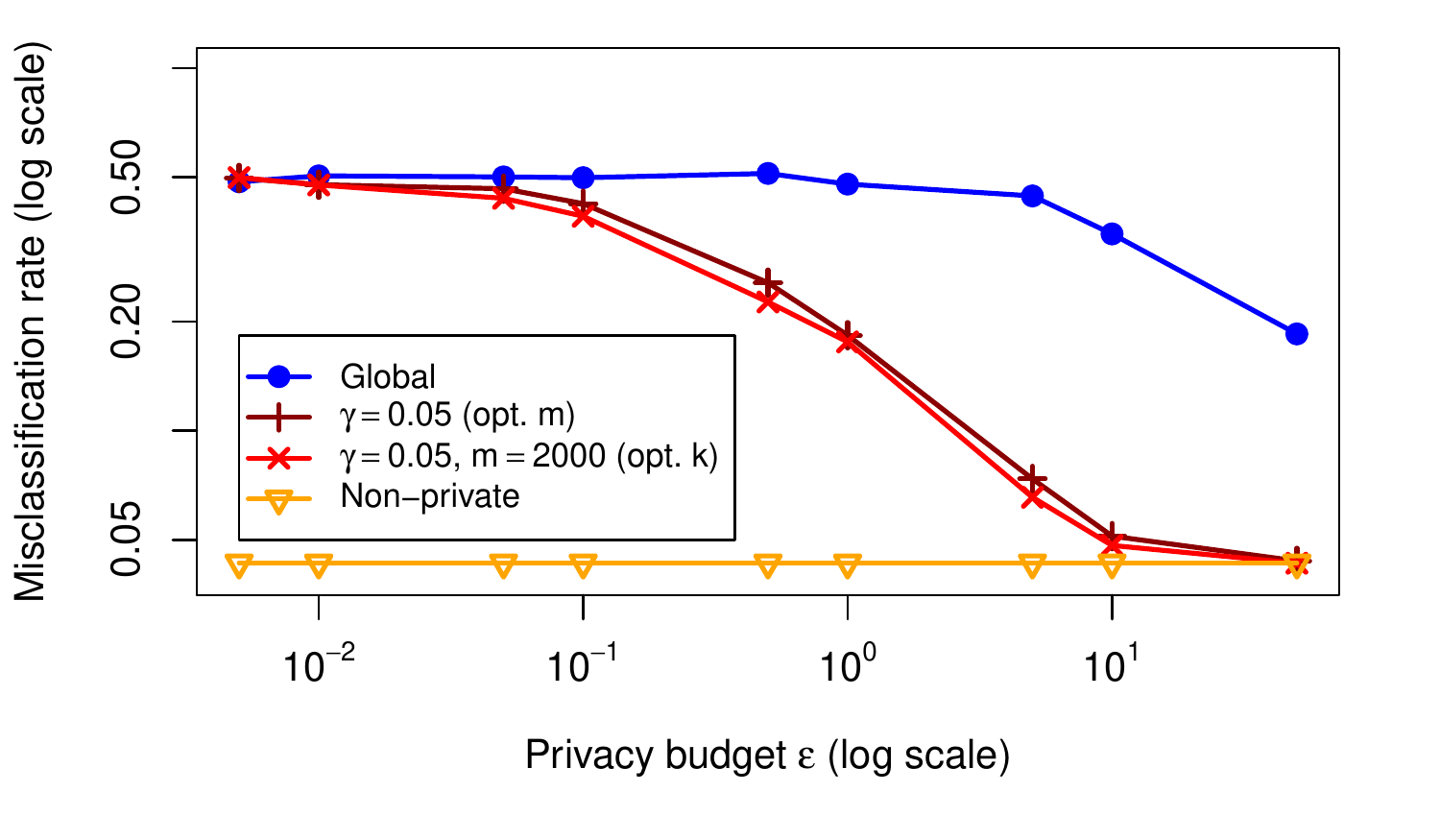}}
\caption{Linear SVM predictive error under sensitivity estimates vs with global sensitivity bound.}
\label{fig:utility_svm_2}
\end{minipage}\hfill
\begin{minipage}[t]{1\columnwidth}
\centering
\centerline{\includegraphics[width=1.00\columnwidth]{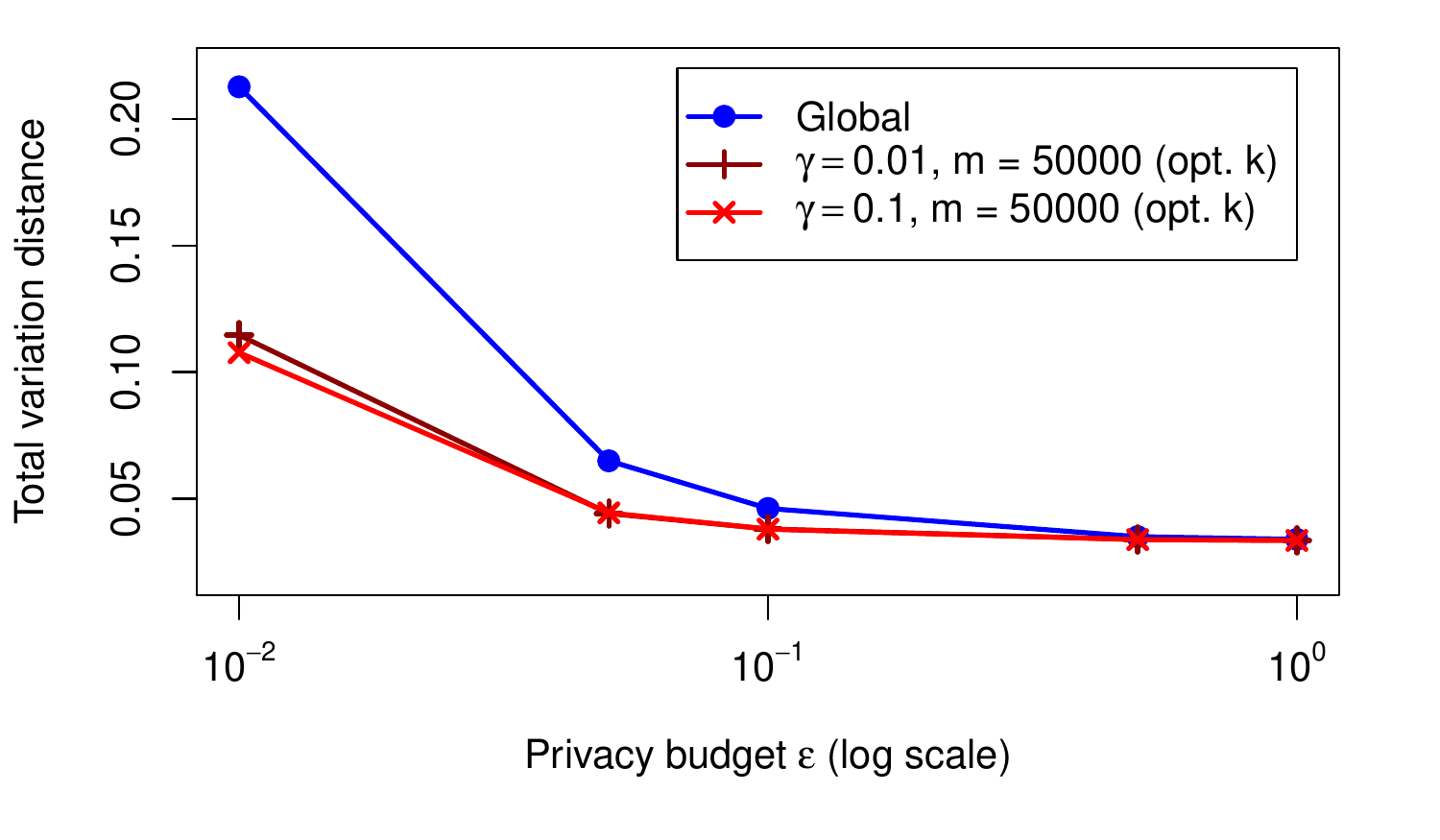}}
\caption{KDE error (relative to non-private) under sensitivity estimates vs global sensitivity bound.}
\label{fig:utility_kde}
\end{minipage}
\end{figure*}

\begin{prop}\label{prop:SVM}
For the SVM run with hinge loss, linear kernel, $\domain=[0,1]^d$, the release
$(\v{w},b)$ has $L_1$ global sensitivity bounded by $2+2 C \sqrt{d} + 4 C d / n$.
\end{prop}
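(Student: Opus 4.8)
I would begin by splitting the $L_1$ sensitivity of the release $(\v w,b)\in\reals^{d+1}$ into its normal-vector and bias parts,
\[\sup_{D\sim D'}\bigl(\|\v w_D-\v w_{D'}\|_1+|b_D-b_{D'}|\bigr)\le\sup_{D\sim D'}\|\v w_D-\v w_{D'}\|_1+\sup_{D\sim D'}|b_D-b_{D'}|,\]
and bound the two terms separately. For the normal vector I would instantiate the stability-based bound $\|\v w_D-\v w_{D'}\|_1\le 4LC\kappa\sqrt d/n$ recalled from \citet{rubinstein2012learning}: the hinge loss $z\mapsto\max\{0,1-z\}$ is $1$-Lipschitz, so $L=1$, and for the linear kernel restricted to $\domain=[0,1]^d$ one has $\kappa^2=\sup_{\v x\in[0,1]^d}k(\v x,\v x)=\sup_{\v x}\|\v x\|_2^2=d$, hence $\kappa=\sqrt d$; substituting yields exactly the $4Cd/n$ term, the single-record change between neighbouring databases being precisely the regime in which that bound is stated.

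The bias is unregularised, so it has no analogous stability; I would instead bound it crudely by $|b_D-b_{D'}|\le|b_D|+|b_{D'}|\le 2\sup_D|b_D|$ and control $|b_D|$ through primal optimality. At an optimum one may take $b=y_i-\langle\v w_D,\v x_i\rangle$ for a support vector $\v x_i$ lying on the margin, and since $\|\v x_i\|_\infty\le1$ this gives $|b_D|\le1+\|\v w_D\|_1$; when no in-margin support vector exists, the same conclusion follows from an exchange argument on the primal objective (moving $b$ past $1+\|\v w_D\|_1$ strictly increases the hinge loss of every example of the minority label while leaving the majority-label losses and the regulariser unchanged, contradicting optimality, and symmetrically for the lower bound; the one-class case is trivial). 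Finally I would bound $\|\v w_D\|_1$ by comparing the optimal objective value to that of the all-zero predictor, whose empirical hinge risk equals $1$: this bounds the quadratic regularisation term, and the conversion $\|\cdot\|_1\le\sqrt d\,\|\cdot\|_2$, together with the regularisation scaling under which the $\v w$-sensitivity bound above is stated, delivers $\|\v w_D\|_1\le C\sqrt d$. Hence $|b_D-b_{D'}|\le2(1+C\sqrt d)=2+2C\sqrt d$, and adding the two blocks gives the claimed $2+2C\sqrt d+4Cd/n$.

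The main obstacle is the bias term: unlike $\v w$ it is not protected by strong convexity, and the convenient identity $b=y_i-\langle\v w_D,\v x_i\rangle$ can fail, so the perturbation/exchange argument on the hinge objective (and the care needed for degenerate samples) is where the real work lies---everything else is bookkeeping around norm conversions and inserting the linear-kernel and hinge constants into the cited stability bound.
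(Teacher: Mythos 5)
Your treatment of the normal vector coincides with the paper's: split the $L_1$ sensitivity over the two blocks, set $L=1$ for hinge loss and $\kappa=\sqrt d$ on $[0,1]^d$, and read off $4Cd/n$ from the stability bound of \citet{rubinstein2012learning}. The divergence --- and the genuine gap --- is in the bias. The paper's proof is a one-line dual argument: it writes $b=y_i-\sum_{j=1}^n\alpha_j y_j k(D_i,D_j)$ and bounds the sum using only the box constraints $0\le\alpha_j\le C/n$ (hence $\sum_j\alpha_j\le C$), with no reference to objective values and no case analysis over support vectors. Your route instead hinges on the claim $\|\v{w}_D\|_1\le C\sqrt d$, and the derivation you offer for it does not deliver that constant: comparing the optimum of $\tfrac12\|\v{w}\|_2^2+\tfrac{C}{n}\sum_i\ell_i$ to the all-zero predictor gives $\tfrac12\|\v{w}_D\|_2^2\le C$, hence $\|\v{w}_D\|_1\le\sqrt d\,\|\v{w}_D\|_2\le\sqrt{2Cd}=\sqrt{2C}\cdot\sqrt d$, which exceeds $C\sqrt d$ for every $C<2$. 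No ``regularisation scaling'' makes $\sqrt{2C}\le C$ uniformly in $C$, so the stated bound $2+2C\sqrt d$ is not established by your proposal as written. The repair is to control $\v{w}_D$ the same way the paper controls $b$: from the representation $\v{w}_D=\sum_j\alpha_j y_j\v{x}_j$ and the box constraints one gets $\|\v{w}_D\|_2\le\sum_j\alpha_j\|\v{x}_j\|_2\le C\kappa=C\sqrt d$ independently of the attained objective value, and then your H\"older pairing $|\langle\v{w}_D,\v{x}_i\rangle|\le\|\v{w}_D\|_1\|\v{x}_i\|_\infty$ (or the paper's direct bound on $\sum_j\alpha_j|k(D_i,D_j)|$) closes the bias term.

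Two secondary remarks. Your exchange argument for the case where no in-margin support vector exists addresses a real degeneracy that the paper's ``for some $i\in[n]$'' silently assumes away; the instinct is right, but as written it is only a sketch, and it becomes unnecessary once you work from the dual KKT expression for $b$. Also note that your primal comparison is not worthless --- for $C\ge 2$ it actually yields a bias bound at least as good as the claimed one --- but a proof of the proposition must hold for all $C>0$, and only the box-constraint route does so; that is the argument the paper intends.
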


We train private SVM using the Laplace mechanism~\cite{rubinstein2012learning},
with global sensitivity bound of Proposition~\ref{prop:SVM} or \sampler.
We synthesise a dataset of $n=1000$ points, selected with equal
probability of being drawn from the positive class $N(0.2\cdot\v{1}, \mathrm{diag}(0.01))$
or negative class $N(0.8\cdot\v{1}, \mathrm{diag}(0.01))$. The feature space's
dimension varies from $d=8$ through $d=64$. The SVMs are run
with $C=3$, \sampler with $m=1500$ \& varying $\gamma$. 
Figure~\ref{fig:sens_svm_2} shows very different sensitivities obtained. While
estimated $\hat{\Delta}$ hovers around 0.01 largely independent of $\gamma$, 
global sensitivity $\overline{\Delta}$ exceeds 20---two orders of magnitude
greater. These patterns are repeated as dimension increases; sensitivity increasing
is to be expected since as dimensions are added, the few points in the training set
become more likely to be support vectors and thus affecting sensitivity.
Such conservative estimates could clearly lead to inferior utility.

\subsection{Effect on Utility}\label{sec:expt-utility}

\myparagraph{Support Vector Classification.}
We return to the same SVM setup as in the previous section, with $d=2$, now plotting
utility as misclassification error (averaged over 500 repeats) vs. privacy budget $\epsilon$.
Here we set $\gamma=0.05$ and include also the non-private SVM's performance as a bound on
utility possible. See Figure~\ref{fig:utility_svm_2}. At very high privacy levels both
private SVMs suffer the same poor error. But quickly with lower privacy, the
misclassification error of \sampler drops until it reaches the non-private rate.
Simultaneously the global sensitivity approach has a significantly higher value and
suffers a much slower decline. These results suggest that \sampler
can achieve much better utility in addition to sensitivity.

\myparagraph{Kernel Density Estimation.}
We finally consider a one dimensional ($d=1$) KDE setting. In Figure~\ref{fig:utility_kde} we show the error (averaged over 1000 repeats) of the Bernstein mechanism (with lattice size $k = 10$ and Bernstein order $h=3$) on 5000 points drawn from a mixture of two normal distributions $N(0.5, 0.02)$ and $N(0.75, 0.005)$ with weights $0.4$, $0.6$, respectively. For this experimental result, we set $m=50000$ and two different values for $\gamma$, as displayed in Figure~\ref{fig:utility_kde}. Once again we observe that for high privacy levels the global sensitivity approach incurs a higher error relative to non-private, while \sampler provides stronger utility. At lower privacy, both approaches converge to the approximation error of the Bernstein polynomial used.

\section{Conclusion}
\label{sec:conclusion}

In this paper we propose \sampler, an algorithm for empirical estimation of sensitivity
for privatisation of black-box functions. Our work addresses an
important usability gap in differential privacy, whereby several generic privatisation
mechanisms exist complete with privacy and utility guarantees, but require analytical bounds
on global sensitivity (a Lipschitz condition) on the non-private target. While this sensitivity
is trivially derived for simple statistics, for state-of-the-art learners 
sensitivity derivations are arduous \eg in collaborative
filtering~\cite{mcsherry2009differentially},
SVMs~\cite{rubinstein2012learning,chaudhuri2011differentially},
model selection~\cite{thakurta2013differentially}, feature selection~\cite{kifer2012private},
Bayesian inference~\cite{dimitrakakis2014robust,wang2015privacy}, and deep
learning~\cite{abadi2016deep}.

While derivations may prevent domain experts from
leveraging differential privacy, our \sampler promises to make privatisation simple when
 using existing mechanisms including Laplace~\cite{dwork2006calibrating},
Gaussian~\cite{dwork2014algorithmic}, exponential~\cite{mcsherry2007mechanism}
and Bernstein~\cite{alda2017bernstein}. All such mechanisms guarantee
differential privacy on pairs of databases for which a level $\Delta$ of non-private function
sensitivity holds, when the mechanism is run with that $\Delta$ parameter. For all such mechanisms
we leverage results from empirical process theory to establish guarantees
of random differential privacy~\cite{hall2012random} when using sampled sensitivities only.

Experiments
demonstrate that real-world learners can easily be run privately without any new derivation whatsoever.
And by using a naturally-weaker form of privacy, while replacing worst-case global sensitivity bounds with
estimated (actual) sensitivities, we can achieve far superior utility than existing approaches.

%Citations within the text should include the authors' last names and year. If the authors' names are included in the sentence, place only the year in parentheses, for example when referencing Arthur Samuel's pioneering work \yrcite{Samuel59}. Otherwise place the entire reference in parentheses with the authors and year separated by a comma \cite{Samuel59}. List multiple references separated by semicolons \cite{kearns89,Samuel59,mitchell80}. Use the `et~al.' construct only for citations with three or more authors or after listing all authors to a publication in an earlier reference \cite{MachineLearningI}.

% Acknowledgements should only appear in the accepted version.
\section*{Acknowledgements}

F. Ald\`a and B. Rubinstein acknowledge the support of the DFG Research Training Group GRK 1817/1 and the Australian Research Council (DE160100584) respectively.

\bibliography{refs}
\bibliographystyle{icml2017}

\ifdefined\ARXIV

\appendix

\section{Proof of Proposition~\ref{prop:transferRDP}}
\label{sec:transferrdp-proof}

By Pinsker's inequality the product measures have bounded total variation distance
$$\left\| P^{n+1} - Q^{n+1}\right\|\leq\sqrt{\frac{1}{2}KL\left(P^{n+1}\| Q^{n+1}\right)}\leq\sqrt{\frac{n+1}{2}\tau}.$$

Denote by $A$ the event that $\epsilon$-DP holds (similarly for $(\epsilon,\delta)$-DP) on neighbouring
databases on $n$ records:
$$A = \left\{\forall R\subset\responses, \Pr{M(D)\in R}\leq e^{\epsilon}\Pr{M(D')\in R}\right\}.$$

Then RDP wrt $Q$ follows as
\begin{eqnarray*}
Q^{n+1}(A) &\geq& P^{n+1}(A) - \sqrt{(n+1)\tau/2} \\
&\geq& 1 - \gamma - \sqrt{(n+1)\tau/2}\enspace.
\end{eqnarray*}

\section{Optimising Sampler Performance with $\boldsymbol{\rho}$}
\label{app:optimising}

This section presents precise statements and proofs for the expressions found in Table~\ref{tab:tuning}.

\subsection{Fixed $\boldsymbol{\gamma}$ Minimum $\boldsymbol{m}$}

\begin{cor}\label{cor:min-m}
For fixed given privacy confidence budget $\gamma\in(0,1)$, taking 
\begin{eqnarray*}
\rho &=& \exp\left(W_{-1}\left(-\frac{\gamma}{2\sqrt{e}}\right)+\frac{1}{2}\right)\enspace, \nonumber \\
		m &=& \left\lceil\left(2(\gamma-\rho)^2\right)^{-1} \log(1/\rho)\right\rceil\enspace, \\ % \label{eq:setting-m} \\
	k &=& \left\lceil m\left( 1 -\gamma +\rho + \sqrt{\log(1/\rho) / (2m)} \right)\right\rceil \enspace, %\label{eq:setting-k} 
\end{eqnarray*}
minimises sampling effort $m$, when running Algorithm~\ref{algo:mechanism} to achieve 
$(\epsilon,\delta,\gamma)$-RDP.
%Under this regime, sample size $m$ grows as 
%$o\left(\frac{1}{\gamma^2}\log\frac{1}{\gamma}\right)$ with increasing confidence
%$\frac{1}{\gamma}\to\infty$.
\end{cor}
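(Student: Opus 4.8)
The plan is to read Theorem~\ref{thm:main-rdp} as a family of sufficient conditions indexed by the free parameter $\rho$: for every admissible $\rho\in(0,\min\{\gamma,1/2\})$, running Algorithm~\ref{algo:mechanism} with $m=\lceil \log(1/\rho)/(2(\gamma-\rho)^2)\rceil$ and $k$ as in~\eqref{thm:main-rdp-condition-k} already certifies $(\epsilon,\delta,\gamma)$-RDP. So minimising sampling effort for fixed $\gamma$ is exactly minimising $g(\rho):=-\log\rho\,/\,(2(\gamma-\rho)^2)$ over that interval, and once the optimal $\rho$ is found the claimed $m$ and $k$ are just its substitution back into~\eqref{thm:main-rdp-condition}--\eqref{thm:main-rdp-condition-k}. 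The whole argument is thus one-dimensional calculus plus one Lambert-$W$ branch-selection step.

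First I would differentiate. Writing $g(\rho)=\tfrac12(-\log\rho)(\gamma-\rho)^{-2}$ and clearing the positive factor $\tfrac12(\gamma-\rho)^{-3}$, the condition $g'(\rho)=0$ reduces to $-(\gamma-\rho)/\rho=2\log\rho$, i.e.\ to the stationarity equation $\rho(1-2\log\rho)=\gamma$. To solve this in closed form I would substitute $\rho=e^{w+1/2}$, so that $1-2\log\rho=-2w$ and the equation becomes $-2\sqrt{e}\,w\,e^{w}=\gamma$, equivalently $w\,e^{w}=-\gamma/(2\sqrt e)$. Since $\gamma\in(0,1)$ and $2/\sqrt e>1$, the argument $-\gamma/(2\sqrt e)$ lies in $(-1/e,0)$, so both real branches of Lambert-$W$ are available; the $W_{-1}$ branch gives $w=W_{-1}(-\gamma/(2\sqrt e))$ and hence the claimed $\rho=\exp(W_{-1}(-\gamma/(2\sqrt e))+1/2)$.

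It then remains to check that this root is admissible and is the global minimiser. From the stationarity equation, $\rho<1$ forces $1-2\log\rho>1$, so $\gamma=\rho(1-2\log\rho)>\rho$, giving $\rho<\gamma$ automatically. The $W_{-1}$ branch has $w<-1$, hence $\rho=e^{w+1/2}<e^{-1/2}$; since $h(\rho):=\rho(1-2\log\rho)$ is increasing on $(0,e^{-1/2})$ with $h(1/2)=\tfrac12(1+2\log2)>1>\gamma$, the unique root of $h(\rho)=\gamma$ in that interval lies below $1/2$, so $\rho<\min\{\gamma,1/2\}$ as required. (The $W_{0}$ branch, by contrast, has $w\in(-1,0)$ and hence $\rho>e^{-1/2}>1/2$, so it is infeasible.) A short monotonicity analysis of $h$ on all of $(0,\gamma)$ shows the stationarity equation has exactly one solution there, so $g$ has a unique interior critical point; and since $g(\rho)\to+\infty$ both as $\rho\downarrow 0$ (numerator blows up) and as $\rho\uparrow\gamma$ (denominator vanishes, numerator $\to-\log\gamma>0$), that critical point is the global minimiser of $g$ on $(0,\gamma)$, hence also on $(0,\min\{\gamma,1/2\})$. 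Substituting it into~\eqref{thm:main-rdp-condition} and~\eqref{thm:main-rdp-condition-k} yields the stated $m$ and $k$.

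The only delicate point is this branch selection together with the verification that the resulting $\rho$ meets the standing constraint $\rho<\min\{\gamma,1/2\}$; the differentiation, the monotonicity of $h$, and the well-definedness of $W_{-1}$ throughout $\gamma\in(0,1)$ (immediate from $-\gamma/(2\sqrt e)>-1/e$) are all routine.
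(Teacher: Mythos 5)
Your proposal is correct and follows essentially the same route as the paper: minimise $m(\rho)=\log(1/\rho)/\bigl(2(\gamma-\rho)^2\bigr)$ by one-dimensional calculus, rewrite the stationarity condition as $we^{w}=-\gamma/(2\sqrt{e})$ via the substitution $\rho=e^{w+1/2}$, and select the $W_{-1}$ branch after discarding $W_0$ as infeasible. The only (harmless) divergences are that you certify global optimality through uniqueness of the interior critical point plus blow-up of the objective at both endpoints, where the paper instead verifies strict convexity via the second derivative, and that your monotonicity argument for $h(\rho)=\rho(1-2\log\rho)$ replaces the paper's numerical evaluation of the image of the $W_{-1}$ branch when checking $\rho<\min\{\gamma,1/2\}$.
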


%\begin{proof}
%We have demonstrated a collection of bounds on $m$ indexed by $\rho\in(0,0.5]$.
%Appendix~\ref{app:optimising} proves that these bounds are strictly convex in
%$\rho$ and are minimised by taking
%$\rho=\exp\left(W_{-1}\left(-\frac{\gamma}{2\sqrt{e}}\right)+\frac{1}{2}\right)$
%where $W_{-1}$, the lower-branch of the Lambert-W function, is real-valued
%over feasible $\gamma$. The rate for the bound derives from bounds on
%Lambert-$W$ due to \citet{lambertBound}.
%\end{proof}

\begin{proof}
For any fixed $\gamma\in(0,1)$, our task is to minimise the bound
\begin{eqnarray*}
m(\rho) &=& \frac {1}{2(\gamma-\rho)^2}\log\frac{1}{\rho}\enspace,
\end{eqnarray*}
on $\rho\in(0,\min\{\gamma,0.5\})$. The first- and second-order derivatives of this
function are
\begin{align*}
	\frac{\partial m}{\partial\rho} &= -\frac{\log\rho}{(\gamma-\rho)^3}-\frac{1}{2\rho(\gamma-\rho)^2} \\
	\frac{\partial^2 m}{\partial\rho^2} &= -\frac{3\log\rho}{(\gamma-\rho)^4}-\frac{2}{\rho(\gamma-\rho)^3} + \frac{1}{2\rho^2(\gamma - \rho)^2} \\
	&= \frac{1}{2\rho^2(\gamma-\rho)^4} \left[ (\gamma-3\rho)^2 + \rho^2\left(6\log\frac{1}{\rho} - 4\right)\right].
\end{align*}
For the second derivative to be positive, it is sufficient for $6\log\frac{1}{\rho} - 4>0$ 
which in turn is guaranteed when $\rho<\exp(-2/3)\approx 0.51$. Therefore $m(\rho)$ is strictly convex on the feasible region; and the first-order necessary
condition for optimality is also sufficient. We seek $\rho^\star$ critical point
\begin{eqnarray*}
	0 &=& -(\gamma-\rho^\star)^{-2}\left[\frac{\log\rho^\star}{\gamma-\rho^\star} + \frac{1}{2\rho^\star}\right] \\
	\Leftrightarrow -\frac{\log\rho^\star}{\gamma-\rho^\star} &=& \frac{1}{2\rho^\star} \\
	\Leftrightarrow -\gamma &=& 2\rho^\star \log\rho^\star - \rho^\star \\
	\Leftrightarrow -\frac{\gamma}{2\sqrt{e}} &=& \left(\log\rho^\star - \frac{1}{2}\right)\frac{\rho^\star}{\sqrt{e}} \\
&=& \left(\log\rho^\star - \frac{1}{2}\right)\exp\left(\log\rho^\star - \frac{1}{2}\right)
\end{eqnarray*}
Applying the Lambert-$W$ function to each side, yields
\begin{eqnarray*}
	\log(\rho^\star) - \frac{1}{2} &\in& W\left(-\frac{\gamma}{2\sqrt{e}}\right) \\
	\Leftrightarrow \rho^\star &\in& \exp\left(W\left(-\frac{\gamma}{2\sqrt{e}}\right)+\frac{1}{2}\right)\enspace.
\end{eqnarray*}

\begin{figure}[t!]
\begin{center}
\centerline{\includegraphics[width=1.00\columnwidth]{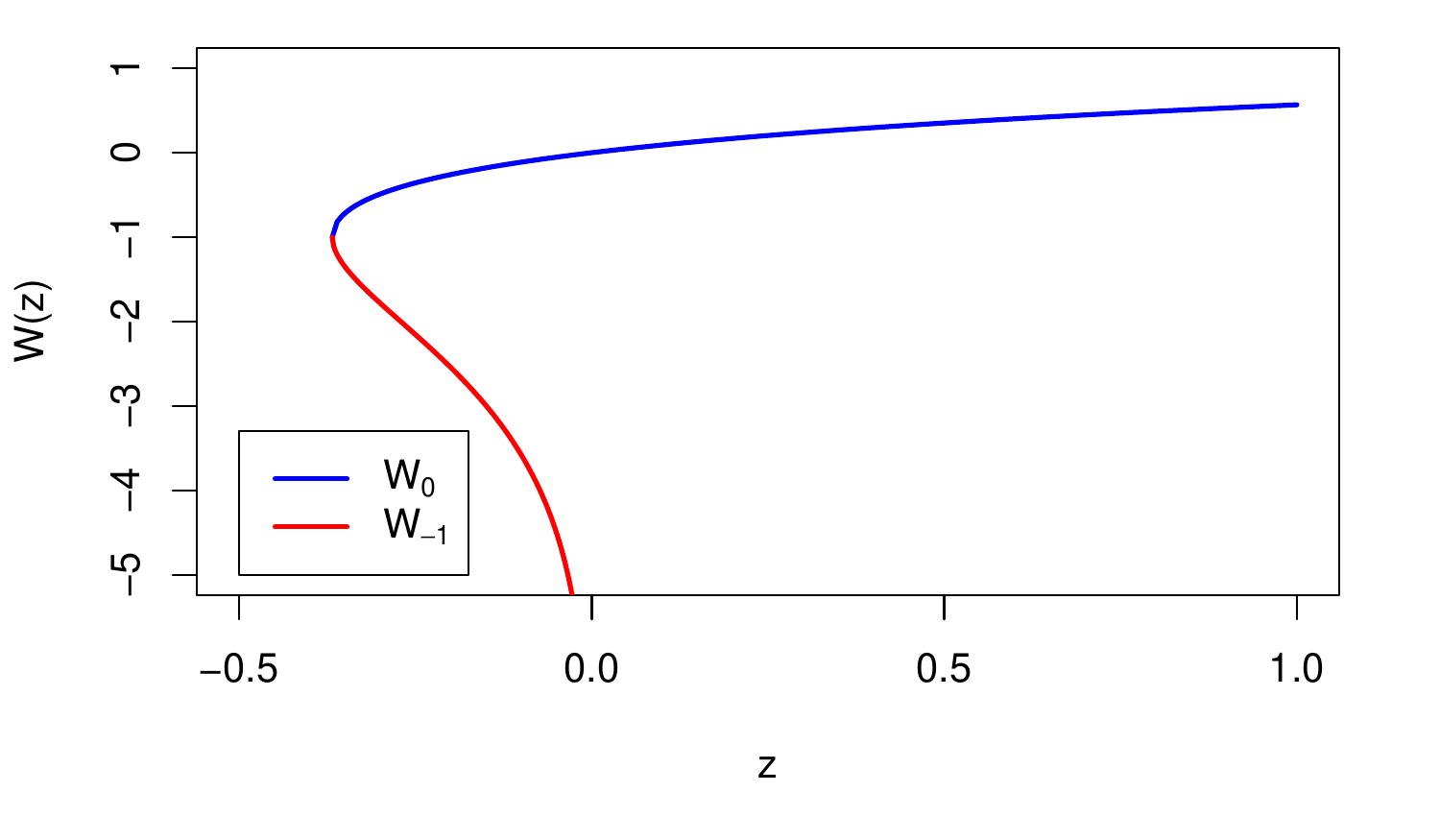}}
\caption{The branches of the Lambert-W function: primary $W_0$ (blue) and secondary $W_{-1}$ (red).}
\label{fig:lambert}
\end{center}
\end{figure}

The Lambert-$W$ function is real-valued on $[-\exp(-1),\infty)$, within which
it is two-valued on $(-\exp(-1),0)$ and univalued otherwise. As depicted in
Figure~\ref{fig:lambert}, it consists of a primary branch $W_0$ which maps
$[-\exp(-1),\infty)$ to $[-1,\infty)$, and a secondary branch $W_{-1}$ which
maps $[-\exp(-1),0)$ to $[-1,-\infty)$. Returning to our condition on
$\rho^\star$, consider that for $\gamma\in(0,1)$ we have that
$-\frac{\gamma}{2\sqrt{e}}$ since $2>\sqrt{e}$. On this domain primary
$W_0\in(-1,0)$ while secondary $W_{-1}\in(-\infty,-1)$ and so the primary
branch would yield $\rho^\star\in(-\exp(-0.5), \exp(0.5))$ which is disjoint
from feasible region $(0,0.5]$. The secondary branch, however, has image
in $(0, 0.28457\ldots)$ which is feasible. Therefore, we arrive at the 
$\rho^\star$ as claimed, completing
the main part of the proof.
%The rate for $m(\rho^\star)$ follows from bounds on the secondary branch
%$W_{-1}$ due to \citet{lambertBound}:
%
%\begin{thm}
%	For $u>0$, the secondary branch is bounded as
%	$-1-\sqrt{2u}-u<W_{-1}(-\exp(-u-1)) < -1-\sqrt{2u}-\frac{2}{3}u$.
%\end{thm}
\end{proof}

\subsection{Fixed $\boldsymbol{m}$ and $\boldsymbol{\gamma}$ Minimum $\boldsymbol{k}$}
%\label{app:optimising-k}

\begin{cor}\label{cor:min-k}
	For given fixed sampling resource budget $m\in\naturals$ and privacy confidence $\gamma\in(0,1)$, taking
\begin{eqnarray*}
\rho &=& \exp\left(\frac{1}{2}W_{-1}\left(-\frac{1}{4m}\right)\right)\enspace, \\ %\label{eq:opt-rho-k} 
k &=& \left\lceil m\left( 1 -\gamma +\rho + \sqrt{\log(1/\rho) / (2m)} \right)\right\rceil \enspace, %\label{eq:setting-k} 
\end{eqnarray*}
provided that
\begin{eqnarray*}
\gamma &\geq& \rho + \sqrt{\log(1/\rho)/(2m)}\enspace, %\label{eq:opt-gamma-k}
\end{eqnarray*}
minimises order-statistic index $k$, when running Algorithm~\ref{algo:mechanism} to achieve 
$(\epsilon,\delta,\gamma)$-RDP.
\end{cor}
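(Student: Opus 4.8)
The plan is to mirror the structure of the proof of Corollary~\ref{cor:min-m}, but now optimising the order-statistic index $k$ rather than the sample size $m$. First I would recall from Theorem~\ref{thm:main-rdp} that, once $m$ and $\gamma$ are fixed, the binding choice is $k = \lceil m(1-\gamma+\rho+\sqrt{\log(1/\rho)/(2m)})\rceil$, with $\rho$ constrained to lie in $(0,\min\{\gamma,1/2\})$ and further constrained so that condition~\eqref{thm:main-rdp-condition} holds. Ignoring the ceiling (which is monotone and therefore does not affect the location of the minimiser), the task reduces to minimising over feasible $\rho$ the smooth function
\begin{eqnarray*}
k(\rho) &=& m\left(1-\gamma+\rho+\sqrt{\frac{\log(1/\rho)}{2m}}\right)\enspace.
\end{eqnarray*}
Since the constant $m(1-\gamma)$ plays no role, this is equivalent to minimising $g(\rho) = m\rho + \sqrt{m\log(1/\rho)/2}$ on the feasible interval.

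Next I would compute the first two derivatives. We have $g'(\rho) = m - \tfrac{1}{2\rho}\sqrt{m/(2\log(1/\rho))}$ and, since $-\sqrt{\log(1/\rho)}$ has a positive second derivative on $(0,1)$ and $\rho\mapsto m\rho$ is linear, $g$ is strictly convex on $(0,1)$; hence the first-order stationarity condition is both necessary and sufficient for the global minimum on the feasible region, provided the stationary point is feasible. Setting $g'(\rho^\star)=0$ gives $2m\rho^\star\sqrt{2\log(1/\rho^\star)} = \sqrt{m}$, i.e. $4m\,(\rho^\star)^2\log(1/\rho^\star) = 1$, equivalently $-2(\rho^\star)^2\log\rho^\star = 1/(2m)$. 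Writing $u = \log(\rho^\star)$ this is $-2u e^{2u} = 1/(2m)$, i.e. $(2u)e^{2u} = -1/(4m)$, so $2u \in W(-1/(4m))$ and therefore $\rho^\star \in \exp\bigl(\tfrac12 W(-1/(4m))\bigr)$, as claimed.

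The remaining step, and the one requiring the most care, is branch selection and feasibility — exactly as in Corollary~\ref{cor:min-m}. Since $1/(4m) \le 1/4 < e^{-1}$ for all $m\in\naturals$, the argument $-1/(4m)$ lies in $(-e^{-1},0)$, where Lambert-$W$ is two-valued: the primary branch $W_0 \in (-1,0)$ would give $\rho^\star = \exp(\tfrac12 W_0(\cdot)) \in (\exp(-1/2),1)$, which can exceed $1/2$ and is therefore generally infeasible, whereas the secondary branch $W_{-1} \in (-\infty,-1]$ gives $\rho^\star = \exp(\tfrac12 W_{-1}(\cdot)) \le \exp(-1/2) < 1/2$, which is feasible for the $\rho<1/2$ requirement. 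One then checks the two further feasibility conditions: that $\rho^\star < \gamma$ and that~\eqref{thm:main-rdp-condition} holds at $\rho^\star$; I expect both to follow from the hypothesis $\gamma \ge \rho + \sqrt{\log(1/\rho)/(2m)}$, since this is precisely the rearrangement of $k \le m$ together with $\rho<\gamma$, ensuring the stationary point lies in the region where Theorem~\ref{thm:main-rdp} applies. The only genuine subtlety is confirming that the unconstrained convex minimiser does not get cut off by the upper endpoint $\min\{\gamma,1/2\}$ under the stated hypothesis; granting that, strict convexity delivers optimality and the proof closes.
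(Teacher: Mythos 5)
Your proposal follows the paper's proof of Corollary~\ref{cor:min-k} essentially step for step: drop the additive constant $m(1-\gamma)$ and the ceiling, minimise $\tilde k(\rho)=\rho+\sqrt{\log(1/\rho)/(2m)}$ over feasible $\rho$, establish strict convexity so that the first-order condition is sufficient, solve the stationarity equation via Lambert-$W$, select the secondary branch $W_{-1}$, and observe that the stated proviso on $\gamma$ is exactly the rearrangement of condition~\eqref{thm:main-rdp-condition} guaranteeing Theorem~\ref{thm:main-rdp} still applies. The route and the final expressions match the paper's.

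Three local slips need repair, though none changes the conclusion. First, the convexity justification is stated with the wrong sign and on too large a domain: it is $+\sqrt{\log(1/\rho)}$ (not its negative) whose second derivative is positive, and only for $\rho<e^{-1/2}\approx 0.61$, not on all of $(0,1)$; the paper computes $\partial^2\tilde k/\partial\rho^2$ explicitly and notes positivity holds because the feasible region lies inside $(0,1/2)\subset(0,e^{-1/2})$. Second, squaring $2m\rho\sqrt{2\log(1/\rho)}=\sqrt{m}$ gives $8m\rho^2\log(1/\rho)=1$, not $4m\rho^2\log(1/\rho)=1$; you then make a second, compensating factor-of-two error in passing from $-2ue^{2u}=1/(2m)$ to $(2u)e^{2u}=-1/(4m)$, so you land on the correct equation $2u=W(-1/(4m))$ only by accident. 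Third, in the branch selection you assert $\exp(-1/2)<1/2$, which is false ($e^{-1/2}\approx 0.607$), so the bound $W_{-1}\le -1$ alone does not place the secondary-branch root below $1/2$. The clean fix is the paper's: the primary branch forces $\rho^\star>e^{-1/2}>1/2$, outside the feasible region, so the secondary branch is the only admissible critical point; feasibility of that root in $\rho<1/2$ does hold for all $m\in\naturals$ since $\exp\left(\tfrac12 W_{-1}(-1/4)\right)\approx 0.34$ and the root decreases in $m$.
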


\begin{proof}
For fixed $m, \gamma$, our task is to minimise the bound
\begin{eqnarray*}
k(\rho) &=& m\left(1 - \gamma + \rho + \sqrt{\frac{\log(1/\rho)}{2m}}\right)\enspace,
\end{eqnarray*}
or equivalently
\begin{eqnarray}
	\tilde{k}(\rho) &=& \rho + \sqrt{\frac{\log(1/\rho)}{2m}}\enspace, \label{eq:tilde-k}
\end{eqnarray}
on $\rho\in(0, \min\{\gamma,0.5\})$. The first- and second-order derivatives of this
function are
\begin{align*}
	\frac{\partial \tilde{k}}{\partial\rho} &= 1 - \frac{1}{2\sqrt{2m}\rho\sqrt{\log(1/\rho)}} \\
	\frac{\partial^2 \tilde{k}}{\partial\rho^2} &= \frac{1}{2\sqrt{2m}\rho^2\sqrt{\log(1/\rho)}}\left[1 - \frac{1}{2\log(1/\rho)}\right]\enspace.
\end{align*}
Since its leading term is positive on feasible $\rho$, it
follows that the second derivative is strictly positive iff $\rho<\exp(-1/2)\approx 0.6$ 
which is guaranteed on the feasible region. Therefore $k(\rho)$ is strictly convex; and the
first-order necessary condition for optimality is also sufficient.
Next we seek $\rho^\star$ critical point
\begin{eqnarray*}
	0 &=& 1 - \frac{1}{2\sqrt{2m}\rho^\star\sqrt{\log(1/\rho^\star)}} \\
	\Leftrightarrow \rho^{\star 2}\log\rho^\star &=& - \frac{1}{8m} \\
	\Leftrightarrow \rho^{\star ^2} \log\rho^{\star ^2} &=& - \frac{1}{4m} \\
	\Leftrightarrow \log \rho^{\star ^2} &=& W\left(-\frac{1}{4m}\right) \\
	\Leftrightarrow \rho^\star &\in& \exp\left(\frac{1}{2} W\left(-\frac{1}{4m}\right)\right)\enspace,
\end{eqnarray*}
where the introduction of the Lambert-$W$ function leverages the identity
$W(z\log z) = \log z$. Since $-\exp(-1)<-(4m)^{-1}<0$ it follows that $W$
is real- and strictly negative in value. Further, since
$\rho\leq 0.5<\exp(-1/2)\approx 0.6$, it follows that our solution lies again
in the lower branch as claimed.

To guarantee that the relation~\eqref{thm:main-rdp-condition} between $m, \gamma, \rho$
is still satisfied, we can solve the bound on $m$ in terms of $\gamma$:
\begin{eqnarray}
	m &\geq& \frac{1}{2(\gamma - \rho)^2}\log\left(\frac{1}{\rho}\right) \nonumber \\
	\Leftrightarrow \gamma &\geq& \rho + \sqrt{\frac{1}{2m}\log\left(\frac{1}{\rho}\right)}\enspace. \label{eq:solved-gamma}
\end{eqnarray}
Operating with this $\gamma$ establishes all the conditions of Theorem~\ref{thm:main-rdp}.
\end{proof}

\subsection{Fixed $\boldsymbol{m}$ Minimum $\boldsymbol{\gamma}$}
%\label{app:optimising-gamma}

\begin{cor}\label{cor:min-gamma}
For given fixed sampling resource budget $m\in\naturals$, taking
\begin{eqnarray*}
\rho &=& \exp\left(\frac{1}{2}W_{-1}\left(-\frac{1}{4m}\right)\right)\enspace, \\ %\label{eq:opt-rho-k} 
\gamma &=& \rho + \sqrt{\log(1/\rho)/(2m)}\enspace, \\ %\label{eq:opt-gamma-k}
k &=& m \enspace, %\label{eq:setting-k} 
\end{eqnarray*}
minimises privacy confidence parameter $\gamma$, when running Algorithm~\ref{algo:mechanism} to achieve 
$(\epsilon,\delta,\gamma)$-RDP.
\end{cor}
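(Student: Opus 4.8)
The plan is to reduce Corollary~\ref{cor:min-gamma} to the optimisation already performed in the proof of Corollary~\ref{cor:min-k}. Fix $m\in\naturals$. By Theorem~\ref{thm:main-rdp}, any parameter triple $(\rho,\gamma,k)$ under which Algorithm~\ref{algo:mechanism} is guaranteed to achieve $(\epsilon,\delta,\gamma)$-RDP must satisfy condition~\eqref{thm:main-rdp-condition}, i.e.\ $m\geq \log(1/\rho)/(2(\gamma-\rho)^2)$. Rearranging this inequality for $\gamma$ exactly as in~\eqref{eq:solved-gamma} shows that, for a fixed $\rho$, the smallest confidence parameter compatible with~\eqref{thm:main-rdp-condition} is
\[
\gamma(\rho)\;=\;\rho+\sqrt{\frac{\log(1/\rho)}{2m}}\enspace.
\]
Since the order-statistic index $k$ enters the hypotheses of Theorem~\ref{thm:main-rdp} only through condition~\eqref{thm:main-rdp-condition-k}, which never lower-bounds $\gamma$, the minimum $\gamma$ achievable over all valid settings equals $\min_{\rho}\gamma(\rho)$; the role of the final step will be to confirm that a legal $k$ indeed exists at the optimiser.

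Next I would minimise $\gamma(\rho)$. The key observation is that $\gamma(\rho)$ is literally the function $\tilde k(\rho)$ of~\eqref{eq:tilde-k} from Corollary~\ref{cor:min-k}, and that the feasibility constraint $\rho<\min\{\gamma,1/2\}$ reduces to $\rho<1/2$ because $\gamma(\rho)>\rho$ for every $\rho<1$. Hence the whole convexity analysis carries over verbatim: $\gamma(\rho)$ is strictly convex on $(0,1/2)\subset(0,e^{-1/2})$, so its unique stationary point is the global minimiser; solving $\gamma'(\rho^\star)=0$ gives $\rho^{\star 2}\log\rho^{\star 2}=-1/(4m)$, and applying the identity $W(z\log z)=\log z$ together with selection of the secondary branch $W_{-1}$ (valid since the solution lies in $(0,e^{-1/2})$) yields
\[
\rho^\star\;=\;\exp\!\left(\tfrac12\,W_{-1}\!\left(-\tfrac{1}{4m}\right)\right)\enspace,
\qquad
\gamma\;=\;\gamma(\rho^\star)\;=\;\rho^\star+\sqrt{\frac{\log(1/\rho^\star)}{2m}}\enspace,
\]
which are precisely the claimed expressions for $\rho$ and $\gamma$.

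Finally I would pin down $k$ and check feasibility. Feasibility of $\rho^\star$ follows from the branch analysis of Corollaries~\ref{cor:min-m}--\ref{cor:min-k}, which places $\rho^\star$ in $(0,1/2)$, together with $\gamma(\rho^\star)>\rho^\star$ (and $\gamma(\rho^\star)\in(0,1)$ for $m$ in the admissible range). For the order statistic, substitute $\gamma=\gamma(\rho^\star)=\rho^\star+\sqrt{\log(1/\rho^\star)/(2m)}$ into the right-hand side of~\eqref{thm:main-rdp-condition-k}: the $-\gamma$ term cancels the $+\rho^\star+\sqrt{\log(1/\rho^\star)/(2m)}$ terms, leaving the value $m$, so~\eqref{thm:main-rdp-condition-k} forces $k\geq m$; combined with the standing requirement $k\leq m$ this pins $k=m$. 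With $(\rho^\star,\gamma(\rho^\star),m)$ both conditions of Theorem~\ref{thm:main-rdp} hold, so Algorithm~\ref{algo:mechanism} run with $k=m$ achieves $(\epsilon,\delta,\gamma(\rho^\star))$-RDP, and by the first paragraph no valid parameter choice attains a smaller $\gamma$.

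The only mildly delicate point — and hence the main ``obstacle'' — is the bookkeeping in the first paragraph: one must argue that the minimum $\gamma$ is governed solely by~\eqref{thm:main-rdp-condition}, i.e.\ that the separate $k$-constraint~\eqref{thm:main-rdp-condition-k} stays satisfiable at the $\gamma$-optimum rather than tightening it. Everything downstream is a direct transcription of the Lambert-$W$ stationarity computation already carried out for $\tilde k$ in Corollary~\ref{cor:min-k}.
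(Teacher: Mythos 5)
Your proposal is correct and follows essentially the same route as the paper's proof: both reduce the problem to minimising the expression in~\eqref{eq:solved-gamma}, observe it is identical to the objective $\tilde{k}(\rho)$ of~\eqref{eq:tilde-k} so the Lambert-$W$ optimiser from Corollary~\ref{cor:min-k} carries over, and then note that $k=m$ at this choice of $\gamma$. Your added bookkeeping on why~\eqref{thm:main-rdp-condition-k} does not tighten the optimum is a welcome elaboration but not a departure from the paper's argument.
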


\begin{proof}
Consider now choosing $\rho$ to minimise $\gamma$, for given fixed $m$ sample
size budget, while then taking order statistic index $k$ according to the selected
$m, \rho, \gamma$. This corresponds to optimising the expression~\eqref{eq:solved-gamma}
with respect to $\rho$.
Noting that this expression is identical to the objective~\eqref{eq:tilde-k}, again
the global optimiser must be $\rho^\star=\exp(W_{-1}(-1/(4m))/2)$. With this choice of 
$\gamma$, the necessary $k$ equates to $m$.
\end{proof}

\begin{figure}[t]
\begin{center}
\centerline{\includegraphics[width=1.00\columnwidth]{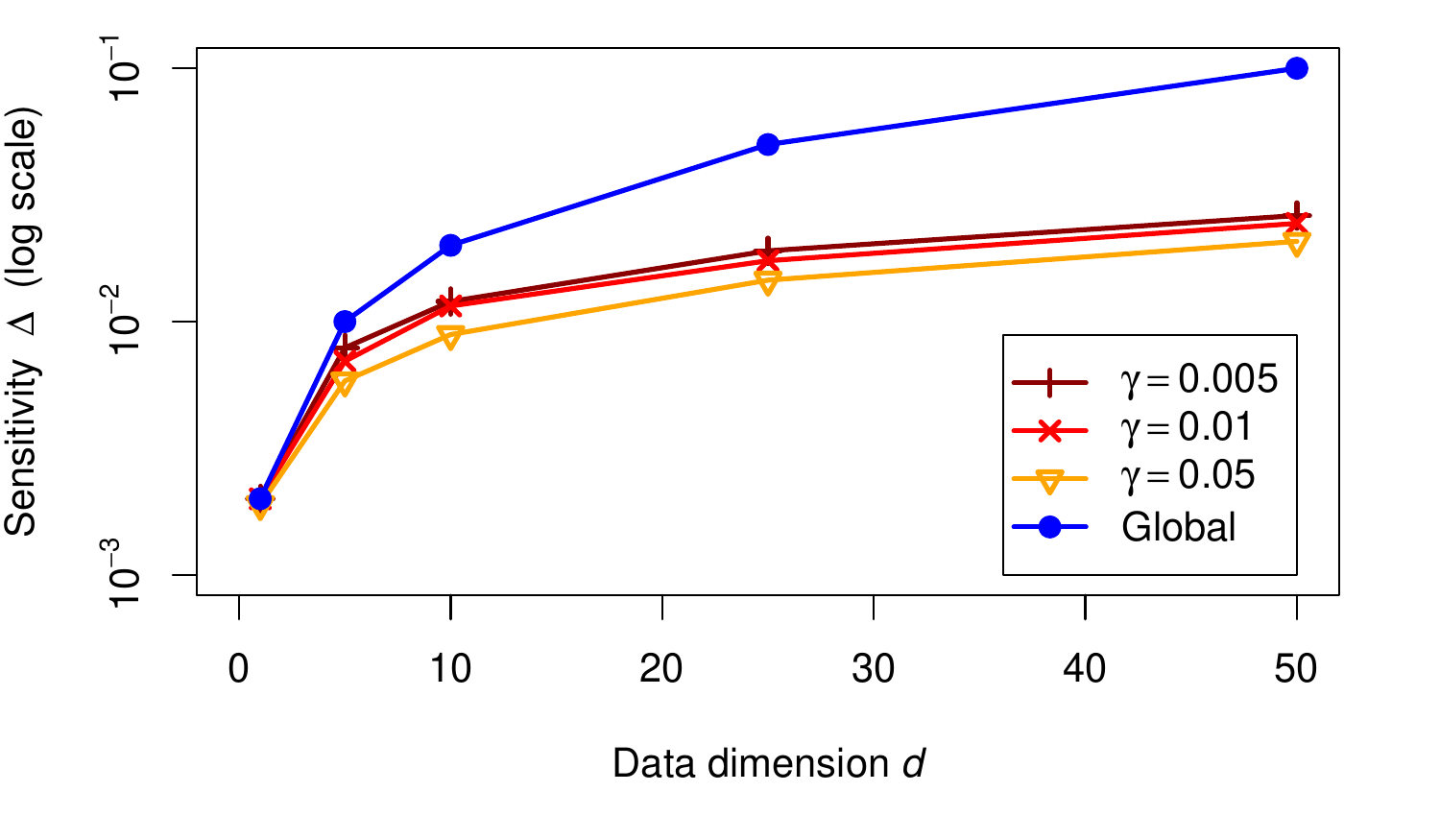}}
\caption{Global vs estimated sensitivity for the sample mean on bounded data.}
\label{fig:glob_vs_sampled_final}
\end{center}
\end{figure}

\section{Global vs. Sampled Sensitivity: Sample Mean of Bounded Data}\label{sec:l1-sensitivity}

Consider the goal of releasing the
sample mean $f(D)=n^{-1}\sum_{i=1}^n D_i$ of a database $D$ as in
Example~\ref{ex:sample-mean}, but 
over domain $\domain=[0,1]^d$. Figure~\ref{fig:glob_vs_sampled_final}
presents: the (sharp) bound on global sensitivity for this target
for use in \eg the Laplace mechanism; and the sensitivity $\hat{\Delta}$ estimated
by \sampler. Here $D$ comprises $n=500$ points sampled from the uniform
distribution over \domain, with \sampler run with optimised $m$ under
varying $\gamma$ as displayed. The reduction in sensitivity due to sampling is
striking (note the log scale). This experiment demonstrates sensitivity for different privacy guarantees (DP vs. RDP). By contrast for the same level of privacy (RDP) in Section~\ref{sec:analytical-vs-sampled}, \sampler quickly approaches the analytical approach.

\section{Proof of Proposition~\ref{prop:SVM}}
\label{sec:SVM-proof}

It follows immediately that $L=1$ and $\kappa=\sqrt{d}$. From the solution
$b=y_i-\sum_{j=1}^n \alpha_j y_j k(D_i, D_j)$ for some $i\in[n]$, combined
with the box constraints $0\leq\alpha_j\leq C/n$, the sensitivity of the bias
can be bounded as $2+2 C \sqrt{d}$. Combining with the existing normal vector
sensitivity yields the result.

\fi

\end{document}